\documentclass{article}

\makeatletter
\relax

\usepackage[verbose=true,letterpaper]{geometry}
\AtBeginDocument{
  \newgeometry{
    textheight=9in,
    textwidth=6.5in,
    top=1in,
    headheight=14pt,
    headsep=25pt,
    footskip=30pt
  }
}

\newcommand{\headeright}{A Preprint}
\newcommand{\undertitle}{A Preprint}
\newcommand{\shorttitle}{\@title}

\usepackage{fancyhdr}
\fancyheadoffset{0pt}
\def\keywordname{{\bfseries \emph{Keywords}}}%
\def\keywords#1{\par\addvspace\medskipamount{\rightskip=0pt plus1cm
\def\and{\ifhmode\unskip\nobreak\fi\ $\cdot$
}\noindent\keywordname\enspace\ignorespaces#1\par}}

\renewcommand{\normalsize}{%
  \@setfontsize\normalsize\@xpt\@xipt
  \abovedisplayskip      7\p@ \@plus 2\p@ \@minus 5\p@
  \abovedisplayshortskip \z@ \@plus 3\p@
  \belowdisplayskip      \abovedisplayskip
  \belowdisplayshortskip 4\p@ \@plus 3\p@ \@minus 3\p@
}
\normalsize
\renewcommand{\small}{%
  \@setfontsize\small\@ixpt\@xpt
  \abovedisplayskip      6\p@ \@plus 1.5\p@ \@minus 4\p@
  \abovedisplayshortskip \z@  \@plus 2\p@
  \belowdisplayskip      \abovedisplayskip
  \belowdisplayshortskip 3\p@ \@plus 2\p@   \@minus 2\p@
}
\renewcommand{\footnotesize}{\@setfontsize\footnotesize\@ixpt\@xpt}
\renewcommand{\scriptsize}{\@setfontsize\scriptsize\@viipt\@viiipt}
\renewcommand{\tiny}{\@setfontsize\tiny\@vipt\@viipt}
\renewcommand{\large}{\@setfontsize\large\@xiipt{14}}
\renewcommand{\Large}{\@setfontsize\Large\@xivpt{16}}
\renewcommand{\LARGE}{\@setfontsize\LARGE\@xviipt{20}}
\renewcommand{\huge}{\@setfontsize\huge\@xxpt{23}}
\renewcommand{\Huge}{\@setfontsize\Huge\@xxvpt{28}}

\providecommand{\section}{}
\renewcommand{\section}{%
  \@startsection{section}{1}{\z@}%
                {-2.0ex \@plus -0.5ex \@minus -0.2ex}%
                { 1.5ex \@plus  0.3ex \@minus  0.2ex}%
                {\large\bf\raggedright}%
}
\providecommand{\subsection}{}
\renewcommand{\subsection}{%
  \@startsection{subsection}{2}{\z@}%
                {-1.8ex \@plus -0.5ex \@minus -0.2ex}%
                { 0.8ex \@plus  0.2ex}%
                {\normalsize\bf\raggedright}%
}
\providecommand{\subsubsection}{}
\renewcommand{\subsubsection}{%
  \@startsection{subsubsection}{3}{\z@}%
                {-1.5ex \@plus -0.5ex \@minus -0.2ex}%
                { 0.5ex \@plus  0.2ex}%
                {\normalsize\bf\raggedright}%
}
\providecommand{\paragraph}{}
\renewcommand{\paragraph}{%
  \@startsection{paragraph}{4}{\z@}%
                {1.5ex \@plus 0.5ex \@minus 0.2ex}%
                {-1em}%
                {\normalsize\bf}%
}
\providecommand{\subparagraph}{}
\renewcommand{\subparagraph}{%
  \@startsection{subparagraph}{5}{\z@}%
                {1.5ex \@plus 0.5ex \@minus 0.2ex}%
                {-1em}%
                {\normalsize\bf}%
}

\renewcommand{\topfraction      }{0.85}
\renewcommand{\bottomfraction   }{0.4}
\renewcommand{\textfraction     }{0.1}
\renewcommand{\floatpagefraction}{0.7}

\newlength{\@abovecaptionskip}
\newlength{\@belowcaptionskip}

\renewenvironment{table}
  {\setlength{\abovecaptionskip}{\@belowcaptionskip}%
   \setlength{\belowcaptionskip}{\@abovecaptionskip}%
   \@float{table}}
  {\end@float}

\renewcommand{\footnoterule}{\kern-3\p@ \hrule width 12pc \kern 2.6\p@}
\def\@listi  {\leftmargin\leftmargini}
\def\@listii {\leftmargin\leftmarginii
              \labelwidth\leftmarginii
              \advance\labelwidth-\labelsep
              \topsep  2\p@ \@plus 1\p@    \@minus 0.5\p@
              \parsep  1\p@ \@plus 0.5\p@ \@minus 0.5\p@
              \itemsep \parsep}
\def\@listiii{\leftmargin\leftmarginiii
              \labelwidth\leftmarginiii
              \advance\labelwidth-\labelsep
              \topsep    1\p@ \@plus 0.5\p@ \@minus 0.5\p@
              \parsep    \z@
              \partopsep 0.5\p@ \@plus 0\p@ \@minus 0.5\p@
              \itemsep \topsep}
\def\@listiv {\leftmargin\leftmarginiv
              \labelwidth\leftmarginiv
              \advance\labelwidth-\labelsep}
\def\@listv  {\leftmargin\leftmarginv
              \labelwidth\leftmarginv
              \advance\labelwidth-\labelsep}
\def\@listvi {\leftmargin\leftmarginvi
              \labelwidth\leftmarginvi
              \advance\labelwidth-\labelsep}

\providecommand{\maketitle}{}
\renewcommand{\maketitle}{%
  \par
  \begingroup
    \renewcommand{\thefootnote}{\fnsymbol{footnote}}
    \long\def\@makefntext##1{%
      \parindent 1em\noindent
      \hbox to 1.8em{\hss $\m@th ^{\@thefnmark}$}##1
    }
    \thispagestyle{empty}
    \@maketitle
    \@thanks
  \endgroup
  \let\maketitle\relax
  \let\thanks\relax
}

\newcommand{\@toptitlebar}{
  \hrule height 2\p@
  \vskip 0.25in
  \vskip -\parskip%
}
\newcommand{\@bottomtitlebar}{
  \vskip 0.29in
  \vskip -\parskip
  \hrule height 2\p@
  \vskip 0.09in%
}

\providecommand{\@maketitle}{}
\renewcommand{\@maketitle}{%
  \vbox{%
    \hsize\textwidth
    \linewidth\hsize
    \vskip 0.1in
    \@toptitlebar
    \centering
    {\LARGE\sc \@title\par}
    \@bottomtitlebar
    \textsc{\undertitle}\\
    \vskip 0.1in
    \def\And{%
      \end{tabular}\hfil\linebreak[0]\hfil%
      \begin{tabular}[t]{c}\bf\rule{\z@}{24\p@}\ignorespaces%
    }
    \def\AND{%
      \end{tabular}\hfil\linebreak[4]\hfil%
      \begin{tabular}[t]{c}\bf\rule{\z@}{24\p@}\ignorespaces%
    }
    \begin{tabular}[t]{c}\bf\rule{\z@}{24\p@}\@author\end{tabular}%
  \vskip 0.4in \@minus 0.1in \center{\@date}   \vskip 0.2in
  }
}

\newcommand{\ftype@noticebox}{8}
\newcommand{\@notice}{%
  \enlargethispage{2\baselineskip}%
  \@float{noticebox}[b]%
    \footnotesize\@noticestring%
  \end@float%
}

\renewenvironment{abstract}
{
  \centerline
  {\large \bfseries \scshape Abstract}
  \begin{quote}
}
{
  \end{quote}
}
\makeatother

\usepackage[utf8]{inputenc}
\usepackage[T1]{fontenc}
\usepackage{amsmath}
\usepackage{amssymb}
\usepackage{amsthm}
\usepackage{graphicx}
\usepackage{booktabs}
\usepackage{microtype}
\usepackage{placeins}
\usepackage{xcolor}
\usepackage{colortbl}
\usepackage{tabularx}
\usepackage{array}
\usepackage{multirow}
\usepackage{arydshln}
\usepackage{caption}
\usepackage[numbers]{natbib}
\usepackage[hidelinks]{hyperref}

\graphicspath{{figures/}}

\renewcommand{\topfraction}{0.95}      
\renewcommand{\bottomfraction}{0.70}
\renewcommand{\textfraction}{0.05}     
\newcommand{\appendixfloatregime}{%
  \raggedbottom
  \setlength{\parskip}{4pt plus 1pt}%
  \renewcommand{\topfraction}{0.99}%
  \renewcommand{\bottomfraction}{0.90}%
  \renewcommand{\textfraction}{0.01}%
  \renewcommand{\floatpagefraction}{0.70}%
  \setlength{\textfloatsep}{6pt plus 1pt minus 1pt}%
  \setlength{\floatsep}{6pt plus 1pt minus 1pt}%
}

\AddToHook{cmd/appendix/after}{\appendixfloatregime}

\makeatletter
\AtBeginDocument{\@ifpackageloaded{iclr2027_conference}{\setlength{\bibsep}{3pt plus 1pt minus 1pt}}{}}
\makeatother
\newcolumntype{L}{>{\raggedright\arraybackslash}X}   
\newcolumntype{Y}{>{\centering\arraybackslash}X}     

\definecolor{rowgraystrong}{gray}{0.78}
\definecolor{rowgraydark}{gray}{0.84}
\definecolor{rowgraymid}{gray}{0.89}
\definecolor{rowgraylight}{gray}{0.94}
\definecolor{rowgrayfaint}{gray}{0.98}

\newcommand{\tablemetrics}{%
  \setlength{\tabcolsep}{4pt}\renewcommand{\arraystretch}{1.2}}
\newcommand{\densetablemetrics}{%
  \setlength{\tabcolsep}{2pt}\renewcommand{\arraystretch}{1.1}}

\renewcommand{\shorttitle}{How Wrong Can a Good Predictor Be?}

\title{How Wrong Can a Good Predictor Be?\\Diverging Updates with Vanishing Predictive KL}

\author{%
  Qifu Wen\textsuperscript{1,2,*} \qquad Shuaijun Liu\textsuperscript{3,*} \\[0.35em]
  Zihan Zhou\textsuperscript{1} \qquad Xi Zeng\textsuperscript{1} \qquad Ningxin Su\textsuperscript{3} \\[0.55em]
  \textsuperscript{1}Boston University, Boston, MA, USA \\
  \textsuperscript{2}Shanghai Jiao Tong University, Shanghai, China \\
  \textsuperscript{3}The Hong Kong University of Science and Technology (Guangzhou), Guangzhou, China \\
  \textsuperscript{*}Co-first authors. \\
  \texttt{qfwen@bu.edu}
}

\begin{document}
\maketitle
\begin{abstract}
Accurate posterior prediction need not require accurate approximation of Bayesian updates. We prove that an unbounded gap between the update maps can coexist with vanishing predictive KL for every fixed finite $K\ge2$ in a stationary symmetric Gaussian HMM. Exact Bayesian mixing and an explicit deterministic radial filter act on the same $K-1$ belief coordinates. As $q\to0^+$, their separation in centered logits in the worst case grows at least linearly in the natural confidence scale $L_K(q)$, while their categorical $D_{\mathrm{KL}}(\mathrm{exact}\Vert\mathrm{radial})$ vanishes at the same explicit witness. Along stationary HMM trajectories, the expected terminal KL between filtered posteriors also converges to zero at $H(q)=\lceil-\log(q)/c\rceil+1$. Typical blocks without switches drive both filters into a common confidence cone, where softmax curvature suppresses their disagreement; a single Gaussian maximal event controls adaptive noise. A sweep with equally spaced Gaussians over $K\in\{2,4,8\}$ illustrates the opposing trends, and binary controls at long horizons compare saturating and nonsaturating recurrences. The result isolates two missing links between internal update gaps and predictive cost: the contribution of separating states to expected loss and decoder sensitivity. Thus even an unbounded internal update gap does not by itself certify predictive failure. The construction is fixed in $K$ and does not provide a universal criterion for when compression is harmless or characterize when internal gaps must incur task loss.
\end{abstract}
\section{Introduction}
\newtheorem{theorem}{Theorem}
\newtheorem{definition}[theorem]{Definition}
\newtheorem{proposition}[theorem]{Proposition}
\newtheorem{lemma}[theorem]{Lemma}
\newtheorem{remark}[theorem]{Remark}

Transformers retain a growing context, whereas state space and recurrent alternatives compress history into a deterministic state of fixed size. Recurrence expressivity and state tracking are therefore active design constraints \citep{mamba3_2026,aussm2025adaptiveUnitarySsm}, with formal comparisons also separating simplified selective SSMs from linear attention \citep{cohenkarlik2026expressivity}. But these architectural results do not answer the question at the task level: when does a strict computational mismatch force worse prediction? In a finite HMM, exact Bayes itself occupies only $K-1$ belief coordinates; the issue is whether a different update geometry with a fixed state must pay predictive loss.

Pointwise representability, learnability, and task cost weighted by the distribution are distinct. The predictive cost of a pointwise separation depends on the loss the decoder can see under the evaluation path law, including the loss magnitude on rare paths. Impossibility theorems at the architecture level establish representability gaps in specified models \citep{shakerinava2026diagonal,sarrof2024expressive}, whereas learnability can fail even for representable functions \citep{hahn2024sensitive}. Neither distinction alone determines this task cost. We study this boundary in a sequential model with finitely many states where both can be calculated.

Our result has two levels. At the level of the update maps, for every fixed finite $K\ge2$ an explicit moving witness makes the quotient distance in centered logits between exact Bayesian mixing and a radial filter grow as $\Omega(L_K(q))$, while their categorical $D_{\mathrm{KL}}(\mathrm{exact}\Vert\mathrm{radial})$ at that same witness tends to zero. At the level of the path law, with pairwise distinct scalar Gaussian means and slow symmetric switching, the filters consume the same stationary HMM observations and their expected terminal KL between filtered posteriors at $H(q)=\lceil-\log(q)/c\rceil+1$ also tends to zero. Thus even an unbounded internal separation need not induce a nonvanishing performance gap.

The result at the level of the maps is decoder geometry: both updates become increasingly confident in the same state, so softmax erases a growing logit discrepancy. The theorem under the path law adds a distributional argument. On a block with no latent switch, which occurs with high probability, one maximal event controls all adaptive Gaussian noise weights, radial saturation keeps the approximation in a confidence cone with exact Bayes, and a uniform KL envelope absorbs the remaining paths.

A sweep with equally spaced Gaussians over $K\in\{2,4,8\}$ illustrates the two opposing trends at finite scale: centered states separate while predictive KL shrinks. The binary case is a microscope rather than the theorem's scope: it visualizes the geometry and separates saturation from tanh specifically. The training experiments do not establish transfer to learned models.

We prove that, for fixed $K$, internal separation can diverge while predictive KL vanishes at the same input. We also prove convergence of expected predictive KL under the stationary law as the horizon grows in the limit of rare switching. The proof explains how decoder sensitivity and the contribution of separating states to expected loss control predictive error. Controlled numerical experiments illustrate the two results; simpler recurrences also achieve small KL at the tested short horizons (Table~\ref{tab:controls}). We do not claim uniformity in growing $K$, an impossibility result for a class of architectures, or learnability of the radial rule.

Figure~\ref{fig:general-k-overview} distinguishes the statements at the common input and under the path law.

\begin{figure*}[t]
\centering
\includegraphics[width=5.5in]{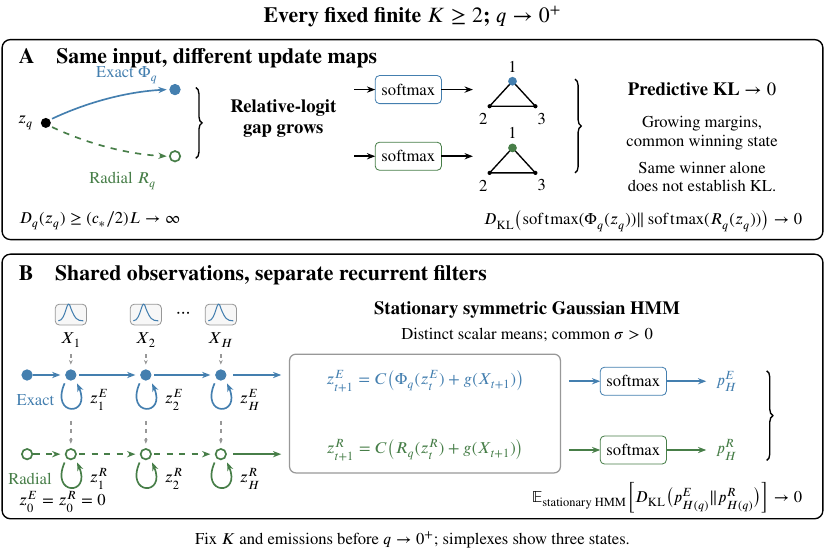}
\caption{\textbf{Two distinct fixed-$K$ separations.} (A) At a common moving input, exact and radial mixing have diverging separation in centered logits but vanishing categorical KL. (B) On shared stationary observations from the Gaussian HMM, their separate recurrences have vanishing expected terminal posterior KL at $H(q)$. Both statements fix $K$ and emission parameters before $q\to0^+$; the schematic does not assert that (A) implies (B).}
\label{fig:general-k-overview}
\end{figure*}
\section{Related Work}
\paragraph{Expressivity, learning, and task cost.}
Efficient sequence models make recurrence expressivity an explicit design variable. Recent work strengthens selective state space updates, analyzes their limits on tracking the state, and compares simplified selective layers with linear attention \citep{mamba3_2026,aussm2025adaptiveUnitarySsm,cohenkarlik2026expressivity,shakerinava2026diagonal,alsmann2026temporal}. Results on the worst case and on formal languages identify computations that specified architectures cannot represent \citep{merrill2024illusion,sarrof2024expressive}, while complementary results show that expressible functions can still be hard to learn \citep{hahn2024sensitive}. These results determine representation or optimization, not the predictive cost weighted by the distribution of a mismatch. Predictive cost additionally depends on decoder sensitivity and the contribution of the separating states weighted by loss under the evaluation path law. With unbounded KL, vanishing event probability alone does not control this contribution. An expressivity analysis for a specific task for GNNs makes a related diagnosis \citep{kemper2025what}; our contribution is an explicit sequential counterexample with a proved loss under the path law.

\paragraph{Harmless information loss and internal divergence.}
Compression that is aware of the task already shows that discarded information can be harmless relative to a downstream predictor family or distortion \citep{dubois2021lossy,dubois2020learning,hafezkolahi2021rate}. We do not optimize a representation or bit rate: the state dimension and prescribed update family are fixed, while their numerical maps depend on $q$ and their distance in centered logits diverges. The generic mathematical shape is not new in isolation. On separable data, gradient descent can send linear predictor norms to infinity while logistic loss tends to zero and the normalized direction converges \citep{soudry2018implicit}. Our conjunction is different: two explicit filtering maps diverge at the same centered input, their categorical KL vanishes there, and a second theorem proves predictive convergence under a stationary sequential law.

\paragraph{Approximate Bayesian filtering and finite memory.}
RNNs can approximate Bayesian filtering statistics on fixed finite horizons and, under additional conditions, uniformly in time \citep{bishop2023universal}. Neural Bayesian Filtering instead learns belief embeddings of fixed length and proves particle consistency over a finite horizon under its stated assumptions \citep{solinas2025neural}. Prediction with short memory gives a different benchmark: an order-$\ell$ Markov predictor attains average KL at most $I(\mathcal M)/\ell$ and average $\ell_1$ error at most $\sqrt{I(\mathcal{M})/\ell}$ \citep{sharan2018prediction}. For an $n$-state HMM, $I(\mathcal{M})\le\log n$, independently of mixing time. This benchmark concerns distributions of the next observation, whereas our theorem measures categorical filtered posteriors. Our predictor maintains $K-1$ belief coordinates. Streaming projection instead retains a truncated mixture over latent paths with a fixed budget through a deterministic recurrence \citep{duranmartin2026predictive}. The distinction is the maintained object and approximation operation, not recurrence itself; our result couples diverging centered map distance with vanishing categorical KL.

\paragraph{Filter stability and robustness.}
Classical filter stability asks whether filters with different initial conditions merge while running the correct model \citep{ocone1996asymptotic,chigansky2011intrinsic,mcdonald2020exponential}. Robustness theory instead perturbs model kernels and controls policy costs \citep{kara2020robustness,kara2022datadriven} or error in the filter kernel and policy performance \citep{demirci2025sensitivity} under the respective assumptions. Control policies with a finite window can also become nearly optimal as the window grows \citep{kara2021nearOptimality}. Our recurrence is neither a differently initialized exact filter nor a sequence of kernels converging to the true one; its pointwise update mismatch grows on the stated witness.

\paragraph{Detection delay and the binary controls.}
The affine control at the long endpoint is adjacent to quickest change detection. Page introduced the cumulative sum scheme \citep{page1954continuous}; Lorden introduced a criterion on conditional delay in the worst case and proved asymptotic optimality, Moustakides established exact CUSUM optimality under it, and Pollak studied nearly minimax detection under a different criterion on conditional delay \citep{lorden1971procedures,moustakides1986optimal,pollak1985optimal}. Pollak's irreversible model with a single change is adjacent to but not identical to our HMM that switches repeatedly. These works calibrate the binary diagnostic but do not imply the fixed-$K$ radial theorem.

\paragraph{Why state space models are the motivating case.}
The design pattern of a deterministic state spans structured SSMs, linear RNNs, gated linear attention, and recurrent competitors \citep{gu2020hippo,gu2022efficiently,smith2023simplified,orvieto2023resurrecting,yang2024gated,yang2024parallelizing,peng2023rwkv,beck2024xlstm,dao2024transformers}. Selective SSM theory and studies of state tracking show how transition structure, composition, precision, or normalization alter representational capability \citep{cirone2024selectivessm,siems2026state,bondaschi2025markov,structuredsparse2025,deltaproduct2025,li2025characterizing}. They motivate the logical question studied here. We do not prove that any named architecture implements, or fails to implement, either filter in our theorem.
\section{Problem Setup: Gaussian HMM with a Fixed Finite State and Radial Filter}
\begin{definition}[Stationary $K$-state Gaussian HMM and coupled filters]\label{def:setup}
Fix an integer $K\ge2$, pairwise distinct means $\mu_0,\ldots,\mu_{K-1}\in\mathbb R$, and $\sigma>0$, with $0<q<(K-1)/K$ so that $L_K(q)>0$ and $\alpha_K(q)>0$. Let $(S_t)_{t\ge0}$ be stationary and uniform on $\{0,\ldots,K-1\}$, with
\[
\Pr(S_{t+1}=i\mid S_t=i)=1-q,\qquad
\Pr(S_{t+1}=j\mid S_t=i)=\frac{q}{K-1}\quad(j\ne i).
\]
Conditional on $S_t=s$, let $X_{t+1}\sim\mathcal N(\mu_s,\sigma^2)$. Write $\mathcal C z=z-K^{-1}(\mathbf1^\top z)\mathbf1$ for centering and
\[
g(x)_s=\frac{\mu_s x}{\sigma^2}-\frac{\mu_s^2}{2\sigma^2}
\]
for the Gaussian score, understood modulo common shifts. For centered logits $z$, define exact transition mixing
\[
\Phi_q(z)=\mathcal C\log(P_q^\top\operatorname{softmax}(z)),
\]
where the logarithm is componentwise. Let
\[
r(z)=\sqrt2\,\|\mathcal C z\|_2,\qquad
\alpha_K(q)=1-\frac{Kq}{K-1},\qquad
L_K(q)=\log\frac{(K-1)(1-q)}{q},
\]
and define the radial map
\[
R_q(z)=\frac{L_K(q)\tanh(\alpha_K(q)r(z)/L_K(q))}{r(z)}\,\mathcal C z,
\]
with continuous value $R_q(0)=0$ and scale $\alpha_K(q)$ at the origin. Starting from $z^E_0=z^R_0=0$, the coupled filters use the same observations:
\[
z^E_{t+1}=\mathcal C\!\left(\Phi_q(z^E_t)+g(X_{t+1})\right),\qquad
z^R_{t+1}=\mathcal C\!\left(R_q(z^R_t)+g(X_{t+1})\right).
\]
Their terminal categorical outputs are $p^E_t=\operatorname{softmax}(z^E_t)$ and $p^R_t=\operatorname{softmax}(z^R_t)$. For $t\ge1$, $p^E_t$ is the filtered posterior of $S_{t-1}$ given $X_{1:t}$; $p^R_t$ approximates that posterior. These outputs precede any additional transition or emission channel.
\end{definition}

Define
\[
d_{\min}=\min_{i\ne j}\frac{(\mu_i-\mu_j)^2}{2\sigma^2}.
\]
Fix $0<c<d_{\min}$ and set $H(q)=\lceil(-\log q)/c\rceil+1$. Constants below may depend on the fixed $K$, the full mean vector, $\sigma$, and $c$. Appendix~\ref{app:geometry-guide} and Table~\ref{tab:geometry-dictionary} translate these centered coordinates into a geometric example with three states.
\section{Main Result and Proof Mechanism}
\begin{definition}[Centered separation between the update maps]\label{def:mismatch}
For centered $z\in\mathbb R^K$, define the quotient distance
\[
\mathcal D_q(z)=
\max_{i,j}\left|
[(\Phi_q(z))_i-(\Phi_q(z))_j]
-[(R_q(z))_i-(R_q(z))_j]
\right|.
\]
It ignores common logit shifts, which do not change a categorical prediction.
\end{definition}

\begin{theorem}[Diverging updates with vanishing predictive KL]\label{thm:quantitative-separation}
Fix any finite $K\ge2$, let $L=L_K(q)$, and set
\[
z_q=L(1/4,-1/4,0,\ldots,0),\qquad
c_\star=\frac12-\tanh\frac12>0.
\]
Then
\[
\frac{
[(\Phi_q(z_q))_0-(\Phi_q(z_q))_1]
-[(R_q(z_q))_0-(R_q(z_q))_1]}
{L}
\longrightarrow c_\star
\qquad(q\to0^+).
\]
Consequently, for all sufficiently small $q$, with the threshold allowed to depend on fixed $K$,
\[
\sup_{z:\mathcal Cz=z}\mathcal D_q(z)
\ge \mathcal D_q(z_q)
\ge \frac{c_\star}{2}L_K(q)
\longrightarrow\infty.
\]
Nevertheless, at the same witness,
\[
D_{\mathrm{KL}}\!\left(
\operatorname{softmax}\Phi_q(z_q)\,\middle\|\,
\operatorname{softmax}R_q(z_q)
\right)\longrightarrow0.
\]
\end{theorem}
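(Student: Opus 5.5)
The plan is a direct asymptotic computation at the witness $z_q$, organized around one exact identity for transition mixing. Write $p=\operatorname{softmax}(z_q)$ and $Z=e^{L/4}+e^{-L/4}+(K-2)$, so that $p_0=e^{L/4}/Z$, $p_1=e^{-L/4}/Z$ and $p_j=1/Z$ for $j\ge2$. Since $e^{L}=(K-1)(1-q)/q\sim(K-1)/q$, we get $p_0\to1$, $p_1\asymp q^{1/2}$ and $p_j\asymp q^{1/4}$ for $j\ge2$. The mixing identity is
\[
(P_q^\top p)_i=(1-q)p_i+\tfrac{q}{K-1}(1-p_i)=\alpha_K(q)\,p_i+\tfrac{q}{K-1}.
\]
Centering does not affect logit differences, so all differences below can be read off before applying $\mathcal C$.

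\textbf{Step 1 (the exact map).} By the identity,
\[
(\Phi_q(z_q))_0-(\Phi_q(z_q))_1=\log\frac{p_0}{p_1}+\log\frac{\alpha+q/((K-1)p_0)}{\alpha+q/((K-1)p_1)} .
\]
Here $\log(p_0/p_1)=L/2$ exactly. Because $q/p_1\asymp q^{1/2}\to0$ and $\alpha\to1$, the correction term is $o(1)$. The exact difference is therefore $L/2+o(1)$.

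\textbf{Step 2 (the radial map).} The witness is already centered, so $\mathcal C z_q=z_q$ and $r(z_q)=\sqrt2\,L\sqrt{1/16+1/16}=L/2$. Hence $R_q(z_q)=2\tanh(\alpha/2)\,z_q$, and
\[
(R_q(z_q))_0-(R_q(z_q))_1=L\tanh(\alpha/2).
\]
Subtracting this from Step 1 and dividing by $L$ gives $\tfrac12-\tanh(\alpha/2)+o(1)/L$. This tends to $c_\star$ because $\alpha_K(q)\to1$, $\tanh$ is continuous, and $L\to\infty$.

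The lower bound $\mathcal D_q(z_q)\ge\frac{c_\star}{2}L$ then holds for small $q$: the pair $(i,j)=(0,1)$ is one of the pairs in the maximum defining $\mathcal D_q$. The first inequality in the display is trivial, since $z_q$ is itself centered.

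\textbf{Step 3 (vanishing KL).} Let $a=\operatorname{softmax}\Phi_q(z_q)$ and $b=\operatorname{softmax}R_q(z_q)$. First I show both concentrate on state $0$, by bounding the logit gaps between state $0$ and every other state.
\begin{itemize}
\item For $a$: $(\Phi_q)_0-(\Phi_q)_j=\log\frac{\alpha p_0+q/(K-1)}{\alpha p_j+q/(K-1)}$. This equals $L/2+o(1)$ for $j=1$ and $L/4+o(1)$ for $j\ge2$. Hence $a_j=O(q^{1/4})$ for every $j\ne0$.
\item For $b$: the gaps are $2\tanh(\alpha/2)\cdot L/2$ for $j=1$ and $2\tanh(\alpha/2)\cdot L/4$ for $j\ge2$. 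Both are at least $\kappa L$ with $\kappa=\tfrac12\tanh(1/2)/2>0$ once $\alpha\ge1/2$, say. These gaps are also at most $L$, so every $b_j\ge e^{-L}/K$.
\end{itemize}
Now split the divergence as
\[
D_{\mathrm{KL}}(a\Vert b)=a_0\log\frac{a_0}{b_0}+\sum_{j\ne0}a_j\log a_j-\sum_{j\ne0}a_j\log b_j .
\]
The first term tends to $0$ because $a_0,b_0\to1$. The second tends to $0$ because $x\log x\to0$ as $x\to0$. For the third, $0\le-\log b_j\le L+\log K=O(\log(1/q))$, while $a_j=O(q^{1/4})$; the product is $O(q^{1/4}\log(1/q))\to0$.

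The main point requiring care is this last term. The radial logits of the non-dominant states diverge, so $\log b_j$ is unbounded. The argument works only because that divergence is logarithmic in $q$, whereas the exact weights $a_j$ decay polynomially in $q$. I would verify both rates explicitly, tracking the $\tanh(\alpha/2)$ factors, rather than appealing to generic continuity.
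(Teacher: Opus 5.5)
Your proposal is correct. Steps 1 and 2 match the paper's argument: the paper packages your $o(1)$ corrections as $\log(1+\eta_0)-\log(1+\eta_1)$ with $\eta_i=bZ_q/(\alpha e^{(z_q)_i})$, and it computes the radial gap $L\tanh(\alpha/2)$ from $r(z_q)=L/2$, as you do.

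You take a genuinely different route for the vanishing-KL claim. The paper does not split the divergence. It shows that every top-over-wrong margin of both $\Phi_q(z_q)$ and $R_q(z_q)$ is eventually at least $L/5$. It also notes that both have pairwise logit diameter below $L$, so $\operatorname{range}(\Phi_q(z_q)-R_q(z_q))<2L$. It then applies its common-cone curvature lemma, which bounds the log-sum-exp Hessian along the segment and integrates the Bregman remainder, giving $D_{\mathrm{KL}}\le 4K(K-1)L^2e^{-L/5}$.

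Your term-by-term split is valid. Your $\kappa$ follows from concavity, $\tanh(\alpha/2)\ge\alpha\tanh(1/2)$ for $\alpha\in[0,1]$. The bound $b_j\ge e^{-L}/K$ follows because every radial gap is below $L$. The split is more elementary: it needs no Hessian or Taylor argument, only that the exact off-top weights decay like $q^{1/4}$ while $-\log b_j$ grows like $\log(1/q)$. It also gives a slightly sharper rate, $O(Le^{-L/4})$. You can shorten it further by noting that $\operatorname{softmax}\Phi_q(z)=P_q^\top\operatorname{softmax}(z)$, so $a_j=\alpha p_j+q/(K-1)$ directly.

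What the paper's route buys is reuse. The same margin-plus-range lemma drives the good-event bound in Theorem~\ref{thm:main}. There no closed form for either filtered posterior is available, and only margins of order $L^{2/3}$ and the $2L$ envelope are known. Your argument as written relies on the explicit form of $a$ at this particular witness. A margin-and-range version of your split would also work there: bound $\log(a_j/b_j)$ by the range of the logit difference and $\log(1/b_0)$ by $(K-1)e^{-m}$.
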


\begin{theorem}[Predictive convergence with a fixed finite state]\label{thm:main}
\textbf{Setting.} For every fixed finite $K\ge2$, consider Definition~\ref{def:setup} with pairwise distinct means, $\sigma>0$, and $0<c<d_{\min}$. The two filters share the stationary HMM observation path and are evaluated at $H(q)=\lceil(-\log q)/c\rceil+1$.

\textbf{Conclusion.}
\[
\mathbb E\!\left[D_{\mathrm{KL}}\!\left(p^E_{H(q)}\,\middle\|\,p^R_{H(q)}\right)\right]\longrightarrow0
\qquad\text{as }q\to0^+.
\]
The expectation is under the stationary HMM path law.
\end{theorem}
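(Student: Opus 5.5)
The plan is to split the expectation over a high-probability \emph{good event} on which both filters are driven into a common confidence cone around the true (constant) state. On the complementary bad event I will use a crude uniform KL envelope. The horizon $H(q)$ counts from time $0$, where both filters start at $z^E_0=z^R_0=0$. So the natural good event is
\[
G_q=\{S_0=\cdots=S_{H-1}\}\cap N_q ,
\]
where $N_q$ is a noise-control event. Since $\Pr(\text{some switch in }H\text{ steps})\le Hq=O(q\log(1/q))$, the switch part is negligible. Write $X_{t}=\mu_s+\sigma\xi_t$ with i.i.d.\ standard normals $\xi_t$. For $j\neq s$, the per-step evidence increment is
\[
g(X_t)_s-g(X_t)_j=\frac{(\mu_s-\mu_j)^2}{2\sigma^2}+\frac{\mu_s-\mu_j}{\sigma}\xi_t ,
\]
whose drift is at least $d_{\min}>c$. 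I take $N_q$ to be a single maximal event: for all windows $[a,b]\subseteq[1,H]$,
\[
\Bigl|\sum_{t=a}^b\xi_t\Bigr|\le \eta(b-a+1)+C_0\sqrt{\log H}\,\sqrt{b-a+1}+C_0\log H ,
\]
with $\eta$ small enough that $d_{\min}-C\eta>c$. A union bound over the $O(H^2)$ windows plus Gaussian tails gives $\Pr(N_q^c)\to0$. In fact it gives polynomially small probability in $H$, hence in $L=L_K(q)\asymp\log(1/q)$.

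The first key step is a margin recursion for each filter on $G_q$. Let $m_t=\min_{j\ne s}(z_{t,s}-z_{t,j})$. For the exact map, I will show $\Phi_q$ is monotone in the margin and loses at most $O(e^{m-L})$ per step. Concretely, $P_q^\top\operatorname{softmax}$ maps a margin $m$ to at least $m-\log(1+(K-1)e^{m-L}\cdot O(1))$, so the margin grows essentially additively until it saturates near $L$. For the radial map, $R_q$ rescales $\mathcal Cz$ by $\lambda(r)=L\tanh(\alpha r/L)/r\le\alpha$. Hence margins contract multiplicatively by $\lambda$, while the evidence adds roughly $d_{\min}$ per step; the fixed point has $r\approx L$. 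Before saturation, $\lambda\ge\alpha\tanh(1)$-type bounds hold, so margins still grow linearly.

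On $N_q$, summing the increments over $H$ steps gives accumulated drift at least $(d_{\min}-C\eta)H-o(H)>cH\ge-\log q\approx L$. Hence both $z^E_H$ and $z^R_H$ satisfy $m_H\ge\kappa L$ for some fixed $\kappa>0$, i.e.\ both lie in a common cone. For the radial map I still need to check that the direction of $\mathcal C z^R$ keeps $s$ strictly dominant, which uses $\max_{j\neq s}$ of the fixed-point equation. Inside the cone both softmaxes put mass $\ge1-(K-1)e^{-\kappa L}$ on $s$. Therefore
\[
D_{\mathrm{KL}}(p^E_H\|p^R_H)\le \sum_{j\ne s}p^E_j\,\log\frac{p^E_j}{p^R_j}+O(e^{-\kappa L})\le C\,e^{-\kappa L}\bigl(L+\max_t|X_t|\bigr).
\]
This tends to zero because $e^{-\kappa L}=O(q^{\kappa})$. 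This is the softmax-curvature suppression mechanism already visible in Theorem~\ref{thm:quantitative-separation}.

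For the envelope, note that $\Phi_q$ has logit spread at most $L+O(1)$, since entries of $P_q^\top p$ lie in $[q/(K-1),1]$, and $R_q$ has norm at most $L$. Hence both $z^E_H$ and $z^R_H$ have spread at most $C(L+1+|X_H|)$. Consequently $D_{\mathrm{KL}}(p^E_H\|p^R_H)\le C(L+1+|X_H|)$ pointwise. By Cauchy--Schwarz,
\[
\mathbb E[D_{\mathrm{KL}};G_q^c]\le C\,\bigl(\mathbb E(L+1+|X_H|)^2\bigr)^{1/2}\Pr(G_q^c)^{1/2}=O(L)\bigl(Hq+H^{-2}\bigr)^{1/2}.
\]
I choose the constants in $N_q$ so that the failure probability is $o(L^{-2})$, which is possible since the tails are Gaussian and $H\asymp L$; then this term vanishes. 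The step I expect to be the main obstacle is the radial margin recursion on $G_q$. The tanh contraction acts on the full centered norm rather than coordinatewise, so I must show that adaptive noise, controlled only through the windowed maximal event, cannot rotate $\mathcal Cz^R$ out of the cone around $e_s$ near saturation. I would first try an explicit invariant-cone argument: show that the set $\{m\ge\kappa r\}$ is mapped into itself by $z\mapsto\mathcal C(R_q(z)+g(X))$ whenever the evidence increment is within its window bound.
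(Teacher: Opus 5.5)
\paragraph{Verdict.} Your skeleton is the paper's: a no-switch block plus one noise event, a common confidence cone on which the KL is exponentially small in the common margin, and an envelope times a vanishing probability elsewhere. The switch-probability step, the envelope with Cauchy--Schwarz, and the in-cone step are sound. Your first-order bound $D_{\mathrm{KL}}\le C_K(R+1)e^{-m}$ is even slightly sharper than the paper's curvature lemma.

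\paragraph{The gap: the radial margin.} The claims you use for the radial filter are false.
\begin{itemize}
\item The radial fixed point is not at $r\approx L$. For $r\ll L$ one has $r-L\tanh(\alpha r/L)\approx r^3/(3L^2)$, so an $O(1)$ evidence increment balances the radial contraction at $r\asymp L^{2/3}$. At $r=\kappa L$ the map removes $\Theta(L)$ per step. The paper's upper barrier gives $r(z^R_t)\le C_rL^{2/3}$ on its maximal event, and every pairwise margin of a centered vector is at most $r$.
\item Your bound $\lambda\ge\alpha\tanh 1\approx0.76$ does not make margins grow linearly. A contraction factor bounded below by $0.76$ only guarantees an $O(1)$ accumulated margin, a geometric memory of about four steps. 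Linear accumulation needs $1-\lambda_t=O(1/t)$.
\item Hence $z^R_H$ does not have margin $\ge\kappa L$. Your summed-drift step treats the radial filter as an accumulator. It ignores that the weights $\gamma_{k,t}=\prod_{\ell=k}^{t-1}\lambda_\ell$ have effective memory of order $L^{2/3}$, not $H$.
\end{itemize}
Consequently no common cone at scale $\kappa L$ exists, and the rate $O(q^\kappa)$ is wrong. Your invariant-cone plan also targets a saturated regime the radial state never visits. Step-wise invariance of $\{m\ge\kappa r\}$ fails from the start anyway. Write $d_{sj}=(\mu_s-\mu_j)^2/(2\sigma^2)$ and $\eta_{sj}=(\mu_s-\mu_j)/\sigma$. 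From $z_0=0$, one step gives margin $d_{sj}+\eta_{sj}\xi_1$, which is negative with probability bounded away from zero. So the noise must be controlled in aggregate, not step by step.

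\paragraph{The missing idea.} The paper's structural observation makes rotation a non-issue. $R_q$ multiplies every centered coordinate by the single scalar $\lambda_t=\lambda(r(z^R_t))$. So on a no-switch block in state $s$, each margin obeys its own scalar recursion, and
\[
z^R_{t,s}-z^R_{t,j}=d_{sj}U_t+\eta_{sj}V_t,\qquad U_t=\sum_{k\le t}\gamma_{k,t},\qquad V_t=\sum_{k\le t}\gamma_{k,t}\xi_k .
\]
Since $\gamma_{k,t}$ is nondecreasing in $k$ with $\gamma_{t,t}=1$, Abel summation bounds the adaptively weighted noise by raw partial sums. The paper uses $|V_t|\le2\max_{u\le H}|\sum_{k\le u}\xi_k|=O(\sqrt{L\log L})$. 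Your window event would also work: suffix sums and Jensen give $|V_t|\le\eta U_t+C_0\sqrt{U_t\log H}+C_0\log H$ with your slack $\eta$.

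What remains is a two-sided barrier at scale $L^{2/3}$:
\begin{itemize}
\item $\tanh x\le x-x^3/8$ on $[0,1]$, together with $r(z^R_t)\ge D_sU_t$, gives $U_t\le C_UL^{2/3}$. The lower bound on $r$ comes from $Pa_s\ne0$ for $K\ge3$; the paper handles $K=2$ by a separate hit-and-persist argument.
\item Hence $\lambda_t\ge1-O(L^{-2/3})$, via $\tanh x\ge x-x^3/3$.
\item Hence $U_t\ge c_UL^{2/3}$ after $O(L^{2/3})<H$ steps.
\end{itemize}
A common margin $cL^{2/3}$ suffices, since your cone bound becomes $C(L+1)e^{-cL^{2/3}}\to0$. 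Your conclusion therefore survives, but the radial step must be replaced by this analysis.

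\paragraph{A smaller repair for the exact filter.} Run the exact-filter recursion for each competitor $j$, not on $\min_j$. Your window event controls $\sum_t\xi_t$ but not $\sum_t\min_j(d_{sj}+\eta_{sj}\xi_t)$. The mean of the latter is negative for a middle state with close neighbours: it equals $\delta^2/2-\delta\sqrt{2/\pi}$ for neighbours at $\pm\delta$ with $\sigma=1$. Moreover, the per-$j$ mixing loss $\log\bigl(1+\tfrac{q}{(K-1)\alpha\,p^E_s}e^{m_j}\bigr)$ couples the coordinates through $p^E_s$. The paper avoids this recursion entirely with its path-mixture Bayes-factor argument for exact Bayes.
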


\paragraph{Logical separation.}
Theorem~\ref{thm:quantitative-separation} is a statement about the maps in the worst case, with a matched decoder statement: the internal distance diverges at an explicit moving witness, yet the two decoded predictions agree there. Theorem~\ref{thm:main} is a distributional statement: predictive agreement also holds in expectation along trajectories generated by the stationary HMM. The first rules out the explanation that the update maps merely become uniformly close; the second shows that the phenomenon is not confined to one logit vector chosen by hand.

\paragraph{Mechanism.}
At the explicit witness, both maps place the same coordinate ahead by a margin proportional to $L_K(q)$, so categorical softmax curvature suppresses their growing internal discrepancy. Along HMM paths, with probability tending to one the logarithmic window contains no latent switch. Conditional on a fixed state, the centered score decomposes into a deterministic direction and one scalar Gaussian noise direction. Because every radial mixing coefficient lies in $(0,1)$, Abel summation controls the full adaptive noise process by one maximum of a Gaussian partial sum. For $K\ge3$, a radial barrier keeps the state at order $L_K(q)^{2/3}$ and a matching lower bound drives every margin of the true state over a wrong state to the same order after an initial transient. Exact Bayes has an even larger margin. The separate binary argument supplies a common logarithmic margin. The curvature bound in the common confidence cone makes the KL vanish on this event, while a $2L_K(q)$ envelope absorbs switch paths and failures of the maximal event. Appendix~\ref{app:proof-anatomy} explains these proof steps and the dependence of the constants on $K$.

\paragraph{Scope.}
Both results fix $K$ before taking $q\to0^+$. The quantitative witness moves outward with $L_K(q)$; it does not imply a gap bounded away from zero on every fixed bounded input. Neither theorem is uniform in growing $K$ or nearly colliding means, and neither is an impossibility theorem for an entire SSM architecture class or a claim about what training by gradient descent will learn. Table~\ref{tab:scope} separates proved statements, evidence at finite scale, and exclusions.

\begin{table}[t]
\caption{\textbf{Summary of results.} The first three rows are mathematical results. The fourth is an illustration at finite scale, and the last row records exclusions rather than negative results.}
\label{tab:scope}
\centering\small\tablemetrics
\begin{tabularx}{\linewidth}{>{\raggedright\arraybackslash}p{0.27\linewidth}>{\raggedright\arraybackslash}p{0.27\linewidth}X}
\toprule
\textbf{Object} & \textbf{Quantifier} & \textbf{Status} \\
\midrule
Centered map distance & every fixed finite $K\ge2$ & proved $\Omega(L_K)$ \\
\rowcolor{rowgraymid}
Categorical KL at the same input & every fixed finite $K\ge2$ & proved $\to0$ \\
Stationary terminal KL & every fixed finite $K\ge2$ & proved $\mathbb E\mathrm{KL}\to0$ \\
\rowcolor{rowgraymid}
Equally spaced Gaussian illustration & $K\in\{2,4,8\}$ & measured; not used in the proof \\
Growing $K$ or architecture class & $K=K(q)$; named models & not claimed \\
\bottomrule
\end{tabularx}

\end{table}
\section{Evidence at Finite Scale for Fixed K}
Theorem~\ref{thm:main} is asymptotic and does not specify a universal finite-$q$ onset. We therefore use a deliberately simple numerical illustration that matches its Gaussian assumptions. For $K\in\{2,4,8\}$, the state means are equally spaced, centered, and rescaled to minimum pairwise distance one. We evaluate $L\in\{8,16,32,64,128\}$ with 4,096 paired stationary paths per cell. Exact Bayes and the radial filter consume the same observations. The two displayed quantities are the centered distance between terminal states and the categorical $D_{\mathrm{KL}}(\mathrm{exact}\Vert\mathrm{radial})$ after the observation.

Figure~\ref{fig:fixed-k-bridge} shows increasing sampled state distance and decreasing categorical KL across all three displayed values of $K$: internal distance grows with the scale of rare switching while predictive KL falls. The curves are not used to establish the theorem or to claim a uniform onset in $K$. Appendix~\ref{app:empirical-replay} gives the estimands, coupling, uncertainty calculation, and exact summary statistics.

\begin{figure*}[t]
\centering
\includegraphics[width=0.49\textwidth]{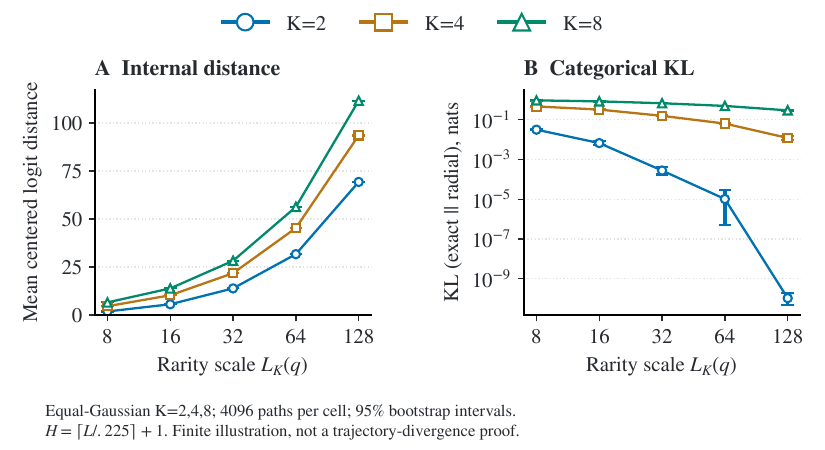}\hfill
\includegraphics[width=0.49\textwidth]{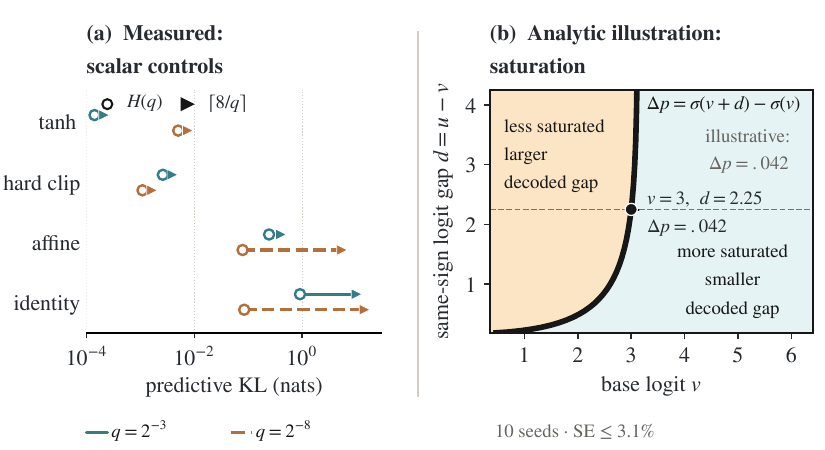}
\caption{\textbf{Separation at finite scale and decoder saturation.} Left pair: centered distance between terminal states grows while categorical $D_{\mathrm{KL}}(\mathrm{exact}\Vert\mathrm{radial})$ falls for Gaussian HMMs with $K=2,4,8$ (4,096 paired paths per cell; 95\% bootstrap intervals). Right pair: KL of the binary controls over ten seeds at $H(q)$ and $\lceil8/q\rceil$ (relative SE $\le3.1\%$), followed by an analytic illustration with a logistic decoder, not sampled trajectories. Nearly equal endpoints are displaced for legibility. Table~\ref{tab:controls} and Appendix~\ref{app:empirical-replay} give values, clipping conventions, and estimation details. Measurements illustrate, rather than prove, the asymptotic theorem.}
\label{fig:fixed-k-bridge}
\label{fig:mechanism}
\end{figure*}
\section{Binary Specialization and Geometry in the Finite Regime}
For $K=2$, translating the two means by their midpoint reduces the model to $(-\mu,+\mu)$ without changing likelihood ratios. Centered logits then collapse to one scalar $h$. Exact mixing becomes $F_q(h)$ and the radial witness becomes the tanh clamp $G_q(h)$ below. Theorem~\ref{thm:quantitative-separation} already supplies an explicit, diverging separation at the level of the maps for every fixed $K$. This binary slice instead serves as a visualization in the finite regime and isolates why an affine map cannot represent exact mixing; the stationary fixed-$K$ theorem does not depend on the affine obstruction proved here.

\begin{proposition}[Geometry of exact mixing]\label{prop:geometry}
For every $0<q<\tfrac12$ the exact mixing map $F_q$ of Definition~\ref{def:setup} satisfies:
\begin{enumerate}
\setlength{\itemsep}{0pt}\setlength{\parsep}{0pt}\setlength{\topsep}{2pt}
\item[(i)] \textbf{Saturation}: $|F_q(h)|<L(q)$ for every finite $h$.
\item[(ii)] \textbf{Contraction}: $F_q$ contracts logit perturbations with global factor at most $1-2q$, attained at the midpoint.
\item[(iii)] \textbf{Not affine}: with $x=\log 2$ and
\[
\Delta_q:=\bigl|F_q(0)-2F_q(x)+F_q(2x)\bigr|>0,
\]
every affine map $A$ satisfies $\max_{z\in\{0,x,2x\}}|F_q(z)-A(z)|\ge\Delta_q/4$.
\end{enumerate}
\end{proposition}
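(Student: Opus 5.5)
The plan is to make the binary reduction explicit and then handle each item by elementary calculus. After shifting the means to $(-\mu,+\mu)$, write $h=z_0-z_1$ for the single centered coordinate, so that $\operatorname{softmax}(z)=(p,1-p)$ with $p=\sigma(h)=e^h/(1+e^h)$. For $K=2$ we have $L(q)=\log\frac{1-q}{q}$. Transition mixing sends $p$ to $p'=q+(1-2q)p$, hence
\[
F_q(h)=\log\frac{p'}{1-p'}=\log\frac{q+(1-q)e^h}{(1-q)+qe^h}.
\]
I will use this closed form throughout. It shows directly that $F_q$ is odd, satisfies $F_q(0)=0$, and is increasing.

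For (i), I will compare the ratio inside the logarithm with $(1-q)/q$. With $u=e^h>0$, the inequality
\[
\frac{q+(1-q)u}{(1-q)+qu}<\frac{1-q}{q}
\]
reduces, after cross-multiplying by positive quantities, to $q^2<(1-q)^2$. This holds because $q<\tfrac12$, and the $u$-terms cancel. Oddness then gives the matching lower bound $F_q(h)>-L(q)$.

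For (ii), I will differentiate through $p$:
\[
F_q'(h)=\frac{(1-2q)\,p(1-p)}{p'(1-p')}.
\]
The point $p'$ is the convex combination $(1-2q)p+2q\cdot\tfrac12$. It therefore lies between $p$ and $\tfrac12$, and since $x\mapsto x(1-x)$ is increasing toward $\tfrac12$, we get $p'(1-p')\ge p(1-p)$. Equality holds only at $p=\tfrac12$, that is, at $h=0$. So $0<F_q'(h)\le 1-2q$, with the maximum attained exactly at the midpoint. The mean value theorem then gives the global Lipschitz factor $1-2q$, and this factor is sharp because it is the derivative at $0$.

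For (iii), the key observation is that the second difference annihilates affine maps. For any affine $A$,
\[
F_q(0)-2F_q(x)+F_q(2x)=(F_q-A)(0)-2(F_q-A)(x)+(F_q-A)(2x).
\]
The right-hand side has absolute value at most $4\max_{z\in\{0,x,2x\}}|F_q(z)-A(z)|$, which yields the bound $\Delta_q/4$.

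The only real obstacle is showing $\Delta_q>0$, and I will do this by explicit computation with $e^x=2$. We have $F_q(0)=0$, $F_q(x)=\log\frac{2-q}{1+q}$, and $F_q(2x)=\log\frac{4-3q}{1+3q}$. Then $\Delta_q\neq0$ is equivalent to
\[
(4-3q)(1+q)^2\neq(2-q)^2(1+3q).
\]
Expanding both sides, their difference is
\[
-3q+9q^2-6q^3=-3q(1-q)(1-2q),
\]
which is strictly negative for $0<q<\tfrac12$. Hence $\Delta_q>0$; equivalently, $F_q(2x)<2F_q(x)$, reflecting the concavity of $F_q$ on $h>0$.
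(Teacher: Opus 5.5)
Your proof is correct. For (i), (ii), and the second-difference bound in (iii) it follows the paper. The paper also gets saturation from $p'=q+(1-2q)p\in(q,1-q)$ and contraction from $F_q'(h)=(1-2q)p(1-p)/[p'(1-p')]$; your convex-combination argument supplies the justification that the paper only asserts.

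The genuine difference is how you show $\Delta_q>0$.
\begin{itemize}
\item The paper computes $F_q''(h)=(1-2q)q(1-q)u(1-u^2)/D_q(u)^2$, with $u=e^h$ and $D_q(u)=((1-q)u+q)(qu+1-q)$. It deduces strict concavity on $h>0$ and reads off $2F_q(x)>F_q(2x)$.
\item You instead evaluate $F_q$ at $0,\log2,2\log2$ and factor the resulting polynomial difference as $-3q(1-q)(1-2q)$.
\end{itemize}

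The concavity route is more structural. It shows that any three equally spaced points on one side of the midpoint certify non-affinity, and it identifies curvature as the obstruction.

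Your route avoids the second derivative and yields the exact value
\[
\Delta_q=-\log\!\left(1-\frac{3q(1-q)(1-2q)}{(1+3q)(2-q)^2}\right).
\]
The expansion $\Delta_q=\tfrac34q+O(q^2)$ in the subsequent remark follows immediately from this, which the concavity argument does not give.
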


\emph{Proof.} See Appendix~\ref{app:proof}. \hfill$\square$

\begin{remark}[The obstruction does not give a lower bound]
Part (iii) of Proposition~\ref{prop:geometry} is adverse evidence for a naive pointwise argument. Direct expansion gives $\Delta_q=\tfrac34 q+O(q^2)$ as $q\to0^+$. Thus the obstruction is strict for every $q>0$ but its quantitative margin is not bounded away from zero along the family with rare switching, so it cannot yield a nonvanishing asymptotic predictive lower bound.
\end{remark}
Moreover, predictive loss is evaluated after the logistic decoder under the HMM path law, not in the logit norm in the worst case. The approximate map $G_q$ preserves the exact saturation scale and midpoint slope, and the shared envelope $|F_q(h)|,|G_q(h)|<L(q)$ gives global control without requiring uniform closeness.

The exact, radial, and affine transition geometry is shown in Figure~\ref{fig:binary-geometry} in the appendix.
\section{Binary Controls and Learned Transfer}
The experiment with learned models asks whether recurrences faithful to the architecture recover the proved mechanism across six switch probabilities. When trained end to end, the selective SSM arm closes at least half the reference gap at four of six $q$ values, below the prespecified criterion of five out of six; these results do not establish transfer to learned models. Under distillation, every nonreference comparison arm meets the threshold of half closure at all six values.

Closure is defined as $(\mathrm{KL}_{\mathrm{S6}}-\mathrm{KL}_{\mathrm{arm}})/(\mathrm{KL}_{\mathrm{S6}}-\mathrm{KL}_{\tanh})$, so the constrained scalar S6 arm sits at $0$, the analytic tanh recurrence sits at $1$, and values above $1$ mean the learned arm beats the proved mechanism. Here every architecture loss is $D_{\mathrm{KL}}(\text{exact Bayes}\Vert\text{model})$, evaluated after converting logits to float32 probabilities and clipping both arguments to $[\varepsilon_{32},1-\varepsilon_{32}]$, where $\varepsilon_{32}=\texttt{torch.finfo(torch.float32).eps}\approx1.1920929\times10^{-7}$. This differs from the metric for the scalar controls in Table~\ref{tab:controls}, which uses $10^{-15}$ clipping. Table~\ref{tab:perq} gives the breakdown at the horizon matched to the theorem $H(q)=\lceil-\ln q\rceil+1$.

\begin{table}[t]
\caption{\textbf{Closure per $q$ when trained end to end, mean $\pm$ standard error over ten
evaluation seeds.} Closure places the constrained scalar S6 arm at $0$ and the
analytic tanh recurrence at $1$, so the arm faithful to the architecture meets the criterion of half closure only at
$q\in\{2^{-5},2^{-6},2^{-7},2^{-8}\}$, the four empirically passing points. The asymptotic theorem supplies no finite-$q$ cutoff.}
\label{tab:perq}
\centering\small\tablemetrics
\begin{tabularx}{\linewidth}{l||*{3}{Y}}
\hline\hline
\rowcolor{gray!20}
\textbf{$q$} & \textbf{Mamba $\times2$} & \textbf{GRU} &
\textbf{Finite window} \\
\hline\hline
\rowcolor{gray!20}
\multicolumn{4}{l}{\emph{Below empirical half closure}} \\
$2^{-3}$ & $-1.59\pm0.39$ & $0.61\pm0.06$ & $-2.75\pm0.65$ \\
\rowcolor{gray!10}
$2^{-4}$ & $-0.09\pm0.21$ & $1.00\pm0.03$ & $-1.17\pm0.47$ \\
\hline
\rowcolor{gray!20}
\multicolumn{4}{l}{\emph{Above empirical half closure}} \\
$2^{-5}$ & $1.04\pm0.02$ & $1.11\pm0.03$ & $0.04\pm0.20$ \\
\rowcolor{gray!10}
$2^{-6}$ & $1.14\pm0.01$ & $1.18\pm0.01$ & $0.68\pm0.06$ \\
$2^{-7}$ & $1.13\pm0.02$ & $1.14\pm0.02$ & $0.74\pm0.07$ \\
\rowcolor{gray!10}
$2^{-8}$ & $1.16\pm0.01$ & $1.19\pm0.01$ & $0.87\pm0.03$ \\
\hline\hline
\end{tabularx}

\end{table}

The learned arm passes at the four smaller tested $q$ values, but this empirical split does not identify the theorem's asymptotic onset. At $q=2^{-3}$ the reference gap $\mathrm{KL}_{\mathrm{S6}}-\mathrm{KL}_{\tanh}$ is only $1.5\times10^{-3}$, so closure is scale sensitive and seed dispersion is widest at the two largest $q$. Even so, the two points below half closure lie $5.4$ and $2.8$ standard errors below the threshold, whereas the four passing points lie at least $26$ standard errors above it using the unrounded results across seeds. The outcome of four out of six is therefore not explained by marginal uncertainty. Under distillation, the closure computed from the ratio of mean losses ranges from $1.16$ to $3.97$ and every nonreference comparison arm passes at all six $q$ values. Table~\ref{tab:perq} instead reports the mean of closure ratios for each seed.

\paragraph{Generalization at long horizons separates the learned architectures.} The secondary endpoint on long paths evaluates the same trained models on sequences of length $\lceil 8/q\rceil$, up to $2048$ steps, against a training horizon matched to the theorem that is at most eight. Here the ranking inverts. The analytic tanh recurrence is horizon stable, its predictive KL staying between $1.7\times10^{-4}$ and $6.4\times10^{-3}$ at the two tested switch probabilities, while the scalar S6 arm degrades to $1.9\times10^{-1}$. The GRU inherits that stability, passing four of six $q$ when trained end to end and all six under distillation. The official Mamba blocks do not: when trained end to end, two stacked blocks pass only two of six, and a single block fails at every $q$, with a worst per-$q$ mean closure at long horizons of $-32.65$ at $q=2^{-3}$ when trained end to end. Under distillation, both Mamba depths pass zero of six at this long endpoint. We therefore do not claim that a canonical selective SSM acquires the mechanism. Matching the exact filter at the theorem's horizon does not ensure stability at longer horizons.

\paragraph{Scalar controls test a simpler explanation.} A naive recurrence could also achieve low loss at the horizon matched to the theorem, making the exhibited tanh geometry unnecessary. We therefore measured three controls against exact Bayes with the same generator ($\mu=\sigma=1$, 200{,}000 paths per point at $H(q)$, 20{,}000 at $\lceil 8/q\rceil$, split across ten fixed seeds). They are a plain affine recurrence $h\!\leftarrow\!(1-2q)h+\ell(x)$ with no saturation, the class considered by the pointwise lower bound; a hard clip of that map at $\pm L(q)$; and the identity accumulator. The third panel of Figure~\ref{fig:mechanism} shows the endpoint pattern; Appendix~\ref{app:empirical-replay}, Table~\ref{tab:controls}, reports the exact cells.

The controls support three conclusions. First, both saturating recurrences are horizon stable at the two tested switch probabilities, while the affine and identity controls degrade at the long endpoint. Their relative ordering is not uniform: the hard clip has lower predictive KL at $q=2^{-8}$, whereas the tanh clamp has lower predictive KL at $q=2^{-3}$. These two points do not identify a crossover threshold. The theorem should therefore be read as proving one saturating instance, not the optimal one. Second, the endpoint matched to the theorem does not establish a separation among scalar recurrence classes; the binary empirical separation among these scalar controls occurs at the long endpoint and is measured, not proved. At that endpoint the affine map reaches $5.53$ nats at $q=2^{-8}$, while both saturating controls remain stable. Third, the closure scale of Table~\ref{tab:perq} anchors $1$ to one saturating witness among several; the readings from the learned arm remain meaningful as calibrated distances to an analytic reference with a guarantee at the stated logarithmic horizon. Its stability at longer horizons is measured here, not established by that theorem.

Hyperparameters were selected by mean validation loss over three tuning seeds before confirmation, with ties broken deterministically and incomplete settings excluded. Tuning and confirmation seeds are disjoint, all final evaluation seeds are included, and the supplement gives the procedure for model selection and implementation details.
\section{Limitations}
Both theorems fix finite $K$ before taking $q\to0^+$. Their constants can deteriorate with $K$ and with nearly colliding means; no rate uniform in $K=K(q)$ is proved. The quantitative witness moves with $L_K(q)$, so it gives neither a gap on every bounded input nor a lower bound on the state dimension. The stationary theorem further assumes pairwise distinct scalar Gaussian means, symmetric switching, stationary initialization, and the stated logarithmic horizon.

The sweep with equally spaced Gaussians is an illustration at finite scale, not evidence for a uniform onset rate. The comparison concerns two explicit maps, not every SSM, and does not show that training learns the radial rule. Binary controls are diagnostics in the finite regime: hard clipping can beat tanh, and the training experiments do not establish transfer to learned models. The result is a counterexample, not a universal criterion for pruning, quantization, distillation, low rank adaptation, or other compression. The converse question, when an internal mismatch must incur nonvanishing task loss, remains open.
\section{Conclusion}
For every fixed finite latent state space, exact Bayesian mixing and our radial filter can diverge linearly in centered logits at an explicit witness while their decoded categorical KL vanishes. Predictive agreement also holds in stationary expectation in the symmetric scalar Gaussian HMM, so it is neither explained by uniform internal approximation nor confined to one input chosen by hand. A controlled sweep with equally spaced Gaussians over $K\in\{2,4,8\}$ illustrates both trends at finite scale, while binary controls compare behavior at short and long horizons. At the common input witness, softmax suppresses a growing map discrepancy; on typical HMM paths, a shared confidence cone suppresses predictive error. The result therefore identifies two missing links from internal separation to predictive loss: its contribution weighted by loss under the evaluation path law, and the decoder's sensitivity there. It does not show that this failure of gap transfer is prevalent in learned models or provide a converse criterion; uniform growing-$K$ rates and extensions to learned sequence models remain open.
\subsection*{Reproducibility statement}
The assumptions and complete paper proofs for the stated results are given in the appendices. Appendix~\ref{app:empirical-replay} specifies the empirical estimands, aggregation, model configurations, hyperparameters, seeds, and clipping conventions. The anonymous supplementary package contains the simulation code and numerical results used for the reported experiments.

\clearpage
\appendix
\section{Open Affine Control Lower Bound}
This appendix concerns a different question from Theorem~\ref{thm:main}: whether the nonsaturating binary affine control has a nonvanishing lower bound at the much longer endpoint $\lceil8/q\rceil$. The section on learned models establishes that separation only by measurement; the final analytic step for the joint event below remains open. The mechanism is \emph{sluggishness}: after a latent switch, the nonsaturating affine recurrence carries an unclamped state near its AR(1) fixed point $d/(2q)$, with $d=2\mu^2/\sigma^2$, and needs $\Theta(1/q)$ steps to change sign, while exact Bayes recovers on a shorter timescale for building confidence. Simulation localizes the cost accordingly: the fraction on the wrong side relative to the truth is between $0.18$ and $0.20$ across every $q$ tested. The subset on which Exact Bayes is confident is reported separately below. Disagreement with the exact filter was not measured. Table~\ref{tab:sluggish} separates the proved ingredients from the remaining step for the terminal joint event. For the concentration calculation, the centered oriented evidence has variance proxy $v=2d$, and at horizon $t$ we choose the deterministic prefix budget $B_t=td$.

\FloatBarrier

\begin{table}[htbp]
\caption{\textbf{The lower bound per step, step by step.} Every deterministic
and probabilistic ingredient is proved; the banded final row is the one step
still open, and the separation at long horizons is not claimed without it.}
\label{tab:sluggish}
\centering\small\tablemetrics
\begin{tabularx}{\linewidth}{>{\raggedright\arraybackslash}p{0.22\linewidth}L>{\raggedright\arraybackslash}p{0.14\linewidth}}
\toprule
\textbf{Step} & \textbf{Statement} & \textbf{Status} \\
\midrule
\rowcolor{rowgraymid}
\multicolumn{3}{l}{\emph{Deterministic core}} \\
Cost of a step on the wrong side & a confident exact predictive against a model on the wrong side
costs $\ge 3/80$ nats & proved \\
Affine state & $h_t=a^t h_0+\sum_{i<t}a^{t-1-i}e_i$ & proved \\
Summation by parts & $\lvert S_j\rvert\le B_t$ gives weighted sum $\ge -2B_t$ & proved \\
\rowcolor{rowgraymid}
Combining the bounds & no crossing by step $t$, and that step costs $\ge 3/80$ &
conditional on the margin \\
\hdashline
\rowcolor{rowgraymid}
\multicolumn{3}{l}{\emph{Probabilistic tail}} \\
Gaussian sub-Gaussianity & a centered real Gaussian is sub-Gaussian with constant
its variance & proved here \\
Prefix tail on a block & on a block with fixed latent signs, some prefix exceeds
$B_t$ with measure at most $2(t{+}1)e^{-B_t^2/(2tv)}$ & proved, discharges the tail \\
\midrule
\rowcolor{rowgraystrong}
\textbf{Terminal joint event} & \textbf{uniform probability that the terminal step is on the wrong side while exact Bayes is confident} & \textbf{open} \\
\bottomrule
\end{tabularx}

\end{table}

\FloatBarrier

With $v=2d$ and $B_t=td$, the charge from the prefix tail is $2(t{+}1)e^{-td/4}$. Instantiating the affine recurrence at $a=1-2q$ and $h_0=d/(2q)$ reduces the condition for no crossing $2B_t<a^t h_0$ to
\[
4qt<(1-2q)^t.
\]
Writing $y=qt$ and taking $q\to0$ gives the boundary equation $4y=e^{-2y}$, whose unique positive root is $y_\star=0.175866\ldots$. Hence $t^\ast q\to y_\star$, and stationarity gives limiting mass of the run age $1-e^{-y_\star}=0.161270\ldots$. Both constants are analytic, independent of $q$, and carry no sampling error. The latter is close to the measured fraction on the wrong side relative to the truth, $0.18$ to $0.20$ across $q=2^{-5}$ to $2^{-12}$ under the convention after mixing of Table~\ref{tab:perq}; the bound from summation by parts loses a factor of two, so the analytic constant need not match the observed fraction.

\paragraph{What the open step actually requires.} The estimand at the long endpoint is the predictive KL at the \emph{final step}, so no integration along a trajectory is needed: it suffices to lower bound, uniformly in $q$, the probability that the terminal step is a step on the wrong side, since the cost per step above then gives $\mathbb{E}[\mathrm{KL}_T]\ge(3/80)\,\mathbb{P}(W_{\mathrm{joint}})$. Stationarity supplies that bridge exactly. Writing $A_T$ for the age of the current latent run, memorylessness gives $\mathbb{P}(A_T\ge k)=(1-q)^k$, hence $\mathbb{P}(A_T\le t^\ast)\to1-e^{-0.176}$, which is the $0.161$ already reported. What remains open is therefore a uniform bound on the joint event that the run is young enough for the affine state not to have crossed and old enough for the exact filter to be confident, not an integration.

\paragraph{Why a stepwise union bound is insufficient.} Charging the worst case at every step forces the tolerance per step to grow like $\sqrt{\log(1/q)}$, shrinking the guaranteed horizon before crossing to $\Theta(1/(q\sqrt{\log(1/q)}))$ and making the resulting probability of the switch window tend to zero. This decay conflicts with the measured nearly constant fraction on the wrong side. Controlling the partial sum instead preserves a constant probability of the switch window.
\section{Geometry of Centered Logits and an Example with Three States}
\label{app:geometry-guide}

This section gives a geometric reading of Definition~\ref{def:setup}. It is not an additional assumption and is not used as a substitute for the proof. Its purpose is to make clear what the state dimension, centering operation, and radial nonlinearity mean when $K>2$.

\subsection{What $K$ counts}

The integer $K$ counts latent modes, not observations, layers, or time steps. A machine with normal, overheated, and failed modes has $K=3$ even if it is monitored for a million steps. At time $t$, a filter assigns a probability vector
\[
 p_t=(p_t(0),\ldots,p_t(K-1)),\qquad p_t(s)\ge0,\quad \sum_s p_t(s)=1.
\]
The vector has $K$ entries but only $K-1$ degrees of freedom because its entries sum to one. Logits make the same redundancy explicit: for every scalar $b$,
\[
 \operatorname{softmax}(z+b\mathbf 1)=\operatorname{softmax}(z).
\]
Thus $z$ and $z+b\mathbf 1$ represent the same belief. Centering chooses one representative from each equivalence class,
\[
 \mathcal Cz=z-\frac{\mathbf1^\top z}{K}\mathbf1,
 \qquad \mathbf1^\top\mathcal Cz=0.
\]
The meaningful filter state therefore lies in the $(K-1)$-dimensional hyperplane orthogonal to $\mathbf1$. Pairwise log odds $z_i-z_j$ are coordinates on this quotient space. This is why every mismatch in Definition~\ref{def:mismatch} is stated through pairwise differences rather than raw logits.

It is useful to keep two scale parameters separate. The state count $K$ fixes the dimension of the belief geometry. The quantity
\[
 L_K(q)=\log\frac{(K-1)(1-q)}q
\]
is a confidence scale determined by the switch probability. For fixed $K$, rare switching means $q\to0^+$ and hence $L_K(q)\to\infty$. The theorem takes this limit after fixing $K$; it does not let the number of modes grow with rarity.

\subsection{Direction records preference; radius records confidence}

Within the centered hyperplane, regard $z$ as an arrow. Its direction records the pattern of relative preferences among states. Its length records how strongly the filter prefers that pattern. Our convention $r(z)=\sqrt2\|\mathcal Cz\|_2$ makes this length agree with the absolute log odds in the binary specialization.

For a concrete $K=3$ belief, take
\[
 z=(2,0,-2),\qquad
 \operatorname{softmax}(z)\approx(0.867,0.117,0.016).
\]
State $0$ leads state $1$, which leads state $2$. If a radial operation halves the arrow, it produces $(1,0,-1)$ and probabilities approximately $(0.665,0.245,0.090)$. The ranking and all ratios between pairwise gaps are unchanged, but the confidence is lower. This example illustrates the geometry only; the actual radial factor is chosen by $q$ and by the current radius.

More precisely, the radial transition can be written
\[
 R_q(z)=\beta_q(z)\mathcal Cz,
 \qquad
 \beta_q(z)=\frac{L_K(q)\tanh(\alpha_K(q)r(z)/L_K(q))}{r(z)}.
\]
It preserves the direction of every nonzero centered state during the transition mixing step and changes only its radius. It is not an orthogonal projection: there is no fixed subspace onto which the state is dropped, and the multiplier depends nonlinearly on the current radius. Near the origin, $R_q(z)\approx\alpha_K(q)\mathcal Cz$; far from the origin, its radius saturates below $L_K(q)$. The next observation then adds the centered score $\mathcal Cg(x)$, which can change both direction and length. A trajectory can therefore turn many times even though each isolated radial mixing step preserves direction.

\subsection{What Exact Bayes does differently}

Exact transition mixing acts on probabilities before returning to centered logits. It adjusts the coordinates according to the full categorical mixture $P_q^\top p$, not through one shared radial multiplier. Except in special directions, it need not preserve the centered arrow's direction. The radial filter deliberately discards this geometry specific to each coordinate and retains only a common confidence contraction followed by the same observation score.

The two main theorems test different consequences of that difference. Theorem~\ref{thm:quantitative-separation} selects an explicit moving logit $z_q$ and proves that the exact and radial maps separate by order $L_K(q)$ in centered coordinates. This rules out the explanation that the two update maps merely converge to one another. Theorem~\ref{thm:main} then couples both filters to the same stationary observation path and proves that their decoded predictive distributions nevertheless converge in expected KL at the logarithmic horizon.

The apparent paradox is resolved by the softmax decoder. A centered displacement is expensive near a decision boundary, where probabilities are sensitive to logits. It can be cheap deep inside a shared confidence cone, where both filters put almost all mass on the same state. In the example with three states, moving from $(10,0,-10)$ to $(6,0,-6)$ is a large internal change, yet both decoded vectors are overwhelmingly concentrated on state $0$. The proof formalizes precisely this regime inside the common cone rather than asserting that every large logit difference is harmless.

\subsection{A dictionary for one step}

The following dictionary separates objects that are easy to conflate.

\begin{table}[t]
\caption{\textbf{Geometric dictionary for the fixed-$K$ result.} Observation scores are added after the transition map and may rotate the state.}
\label{tab:geometry-dictionary}
\centering\small\tablemetrics
\begin{tabularx}{\linewidth}{LX}
\toprule
\textbf{Object} & \textbf{Operational meaning} \\
\midrule
$K$ & Number of possible hidden modes; the centered belief state has dimension $K-1$. \\
\rowcolor{rowgraymid}
$q$ and $L_K(q)$ & Switch rarity and its associated confidence scale; $L_K(q)$ grows as switches become rarer. \\
$\mathcal Cz$ & Logit state after removing the common offset that does not affect prediction. \\
\rowcolor{rowgraymid}
$r(z)$ & Overall confidence magnitude, not a state count and not a time horizon. \\
$R_q$ & Nonlinear confidence cap that preserves centered direction during transition mixing; it is not a linear projection. \\
\bottomrule
\end{tabularx}

\end{table}

\section{Proof Roadmap for Fixed K}
This section is a reading guide to Appendix~\ref{app:proof}. The quantitative theorem at the level of the maps and the stationary theorem under the path law share an endpoint set by softmax curvature but answer different questions. The first follows from an explicit centered witness and direct asymptotics. The second has a probabilistic spine and two recovery arguments specific to the dimension.

\paragraph{Dependency order.} Theorem~\ref{thm:quantitative-separation} first compares the exact and radial gaps for the chosen pair at $z_q$, then invokes the curvature lemma for the common cone. For Theorem~\ref{thm:main} and $K\ge3$, Lemma~\ref{lem:abel} and the Gaussian maximal event feed the radial barriers and Lemma~\ref{lem:radial-recovery}; separately, the calculation over path mixtures yields Lemma~\ref{lem:exact-margin}. These two margins meet only in Lemma~\ref{lem:curvature}, with the deterministic envelope controlling the complement. The binary branch replaces the radial barrier step with common scalar recovery and then rejoins the same curvature and envelope argument.

\begin{figure*}[htbp]
\centering
\includegraphics[width=5.5in]{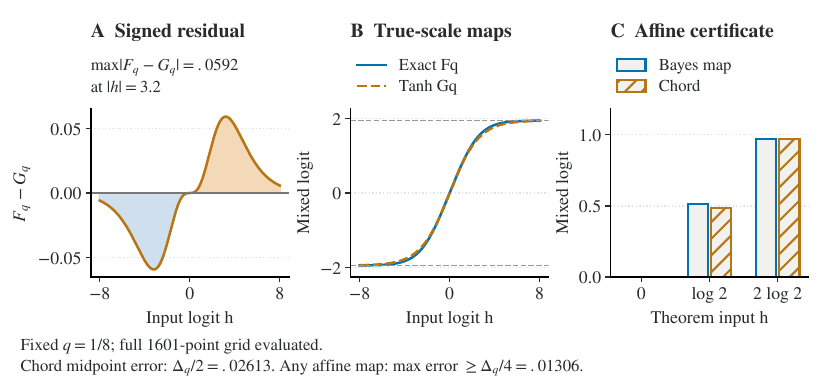}
\caption{\textbf{Binary transition geometry at $q=1/8$.} A: signed residual of exact minus tanh. B: the maps at true scale share saturation levels and midpoint slope. C: values at $0,\log2,2\log2$ certify that the map is not affine; their second difference $\Delta_q$ forces every affine approximation to incur maximum error at least $\Delta_q/4$. This is a fixed-$q$, $K=2$ diagnostic.}
\label{fig:binary-geometry}
\end{figure*}

\subsection{Proof outline}

Both filters start in the same centered belief coordinates and consume the same observations. The proof under the path law first conditions on a constant latent block. It then shows that exact Bayes and the radial recurrence identify the true state with diverging margins. A deterministic lemma on the categorical KL converts that common confidence cone into an exponentially small loss on the good event, while a $2L_K(q)$ envelope controls every remaining path.

\subsection{The $K\ge3$ branch}

On a block with no switch in state $s$, the centered Gaussian score is $a_s+wZ_t$. Because radial mixing uses one adaptive scalar coefficient, the approximate state has the exact representation $z^R_t=U_ta_s+V_tw$. Abel summation bounds every $V_t$ by twice one maximum of a Gaussian partial sum, so a single maximal inequality controls the whole horizon without a temporal union bound. Projection onto $w^\perp$ supplies a deterministic signal direction; upper and lower radial barriers then give margins for the true state of order $L_K(q)^{2/3}$. Recovery of the exact filter is established separately by comparing the constant path in the true state with constant wrong paths and the aggregate Bayes factor of all alternatives that contain a switch.

\subsection{The binary branch}

For $K=2$, the signal and noise directions are collinear, so the projection step is unavailable. Midpoint translation reduces arbitrary distinct means to the symmetric model. An event on the prefix evidence forces both scalar recurrences to hit a common margin, and a geometric sum over possible hit locations controls suffix persistence. Figure~\ref{fig:binary-geometry} visualizes the transition maps used by this branch. The argument unions suffix lengths, not transition errors at one step, and its failure probability remains negligible after multiplication by the KL envelope.

\subsection{Conditioning argument}

There are three changes of probability law. First, the event that no switch occurs costs at most $H(q)q$. Second, conditional on that event, Gaussian concentration applies to independent emissions from the fixed latent state. Third, the exact posterior still sums over every latent path; candidate paths that contain a switch are included and controlled through their aggregate Bayes factor. The final expectation is recovered by averaging the finitely many conditional bounds over the stationary uniform starting state.
\section{Proof Anatomy, Constants, and Quantifiers}
\label{app:proof-anatomy}

This section explains the main steps in the proof of Theorem~\ref{thm:main} and the dependence of its constants on the model parameters. Full proofs are given in Appendix~\ref{app:proof}.

\subsection{Quantifier order}

Fix $K$, the distinct means $\mu_0,\ldots,\mu_{K-1}$, the variance $\sigma^2$, and a horizon constant $c<d_{\min}$, where
\[
 d_{\min}=\min_{i\ne j}\frac{(\mu_i-\mu_j)^2}{2\sigma^2}>0.
\]
Only then send $q\to0^+$. Constants denoted $C_K,c_s,\kappa$, or a small-$q$ threshold may depend on these fixed parameters. This permits a finite union over states and competitors, but it does not provide a bound uniform in $K$ or in a family whose means approach one another. The theorem has the logical form
\[
 \forall K<\infty\;\forall(\mu_0,\ldots,\mu_{K-1})\text{ distinct},
 \qquad \lim_{q\to0^+}\mathbb E D_{\rm KL}(p_H^E\Vert p_H^R)=0,
\]
not a joint limit over $K$ and $q$.

\subsection{The probabilistic argument for $K\ge3$}

Condition on a block with no switch whose true state is $s$. The centered Gaussian score has the exact form with two vectors
\[
 \mathcal Cg(X_{t+1})=a_s+wZ_{t+1},
\]
so the radial recurrence stays in the random plane spanned by $a_s$ and $w$:
\[
 z_t^R=U_ta_s+V_tw,\qquad
 U_{t+1}=\beta_tU_t+1,\quad
 V_{t+1}=\beta_tV_t+Z_{t+1}.
\]
This does not reduce the filter to two states. It reduces the analysis of one conditioned block to one accumulated signal coefficient $U_t$ and one accumulated noise coefficient $V_t$.

The multiplier $\beta_t$ depends on the past, so the noise is not a deterministic weighted Gaussian sum. The key observation is order, not independence: for fixed terminal $t$, the products of positive multipliers attached to later observations are nondecreasing. Abel summation therefore gives the pathwise inequality
\[
 |V_t|\le2\max_{u\le H}\left|\sum_{k=1}^u Z_k\right|
\]
simultaneously for all $t$. One maximal bound from an exponential martingale controls this maximum. The proof does not apply a separate tail bound at every time and then union bound over the horizon.

On the resulting event, $|V_t|=O(\sqrt{L\log L})$. The tanh expansion supplies an upper barrier $U_t=O(L^{2/3})$ and a matching lower bound after an initial transient of order $L^{2/3}$. The exponent $2/3$ is the balance at which the unit signal increment competes with the cubic correction in $\tanh x=x-\Theta(x^3)$. It is sufficient for recovery because
\[
 z^R_{t,s}-z^R_{t,j}
 =d_{sj}U_t+\eta_{sj}V_t,\qquad
 d_{sj}=\frac{(\mu_s-\mu_j)^2}{2\sigma^2}>0,\quad
 \eta_{sj}=\frac{\mu_s-\mu_j}{\sigma},
\]
and $L^{2/3}$ dominates $\sqrt{L\log L}$.

Exact Bayes is handled separately. The constant-$s$ latent path is the likelihood reference. Constant wrong paths lose exponentially in $L$, while the aggregate Bayes factor of all switched paths has expectation of order $Hq$. This is a calculation over path mixtures; it does not assume independence between posterior coordinates. Because $H=\Theta(L)$ and $q=\Theta(e^{-L})$ for fixed $K$, mass of the switched paths vanishes.

\subsection{Bounding predictive KL}

The radial and exact arguments meet only after both filters have entered a common confidence cone. On the good event, every margin of the true state over a wrong state is at least $c_*L^{2/3}$. Along the line segment between the two terminal logits, softmax curvature is then exponentially small. Combining this curvature with the deterministic terminal range bound yields
\[
 D_{\rm KL}(p_H^E\Vert p_H^R)
 \le C_KL^2e^{-c_*L^{2/3}}.
\]
The bound on this event alone is not enough because KL is unbounded in arbitrary logits. Lemma~\ref{lem:envelope} supplies a pathwise $2L$ envelope, which is multiplied by the vanishing probability of the complement. Combining the bound on this event with the uniform bound on its complement turns recovery with high probability into convergence of expected KL.

\subsection{Why the binary branch is separate}

For $K\ge3$, distinct scalar means imply that $a_s$ cannot be collinear with the common noise direction $w$: otherwise a nonzero quadratic would agree with an affine function at at least three distinct points. This makes the radial radius control available. With two points, that polynomial contradiction disappears. The binary proof instead translates arbitrary means to the symmetric pair $(-\delta,+\delta)$ and controls both scalar recurrences through a common recovery event. It then rejoins the same KL curvature and envelope lemmas. The separate branch is an issue of proof geometry, not a restriction to symmetric models with two states.

\subsection{Rates proved and rates not claimed}

The witness at the level of the maps and stationary path result have different rates and should not be merged. At the explicit witness, centered map separation is asymptotically $c_\star L$, and decoder KL is bounded by a polynomial in $L$ times $e^{-L/5}$. Along typical blocks with no switch in the stationary proof, the radial margin for the true state is only required to grow as $L^{2/3}$, which still makes the decoder cost on the good event vanish. Neither statement identifies an optimal exponent, a sharp finite-$q$ onset, or constants uniform over state geometries. The curves with equally spaced Gaussians in Appendix~\ref{app:empirical-replay} are therefore illustrations rather than rate estimates.

\section{Proofs of the Main Results}
\label{app:proof}

This appendix proves Theorems~\ref{thm:quantitative-separation}
and~\ref{thm:main} for every fixed finite $K\ge2$. The quantitative
theorem at the level of the maps uses an explicit centered witness and a direct asymptotic
calculation. The stationary theorem uses a representation in two scalars of the
radial filter, one Gaussian maximal event, and an analysis over path mixtures of
exact Bayes for $K\ge3$; the binary case uses the same KL bounds after
midpoint translation and a recovery argument in one dimension.

\subsection{Preliminaries}

Write $T=-\log q$, $\alpha=\alpha_K(q)$, and $L=L_K(q)$.  For fixed $K$,
\[
 L=T+\log(K-1)+\log(1-q)=T+O(1).
\]
Consequently, for all sufficiently small $q$, $L>1$, $\alpha\ge1/2$, and
$H\le C_HL$.  We will also use
\[
 q=\frac{K-1}{e^L+K-1}\le(K-1)e^{-L},\qquad
 Hq\longrightarrow0,\qquad (1-q)^{-(H-1)}\longrightarrow1.
\tag{10}
\]

Fix a state $s$ and let
\[
 \mathcal N_s=\{S_0=\cdots=S_{H-1}=s\}.
\]
Conditional on $\mathcal N_s$,
$X_{t+1}=\mu_s+\sigma Z_{t+1}$ for $t=0,\ldots,H-1$, where the
$Z_t$ are independent standard Gaussians.  With
\[
 a_s=\mathcal C\left(
       \frac{\mu_s\boldsymbol\mu}{\sigma^2}
       -\frac{\boldsymbol\mu^{\odot2}}{2\sigma^2}\right),
 \qquad
 w=\mathcal C\left(\frac{\boldsymbol\mu}{\sigma}\right),
\tag{11}
\]
the centered score is $g(X_{t+1})=a_s+wZ_{t+1}$.

\begin{lemma}[Radial dynamics in two scalars]\label{lem:two-scalar}
On $\mathcal N_s$, adapted scalar processes $U_t,V_t$ satisfy
\[
 z^R_t=U_ta_s+V_tw,\qquad U_0=V_0=0,
\]
\[
 U_{t+1}=\beta_tU_t+1,\qquad
 V_{t+1}=\beta_tV_t+Z_{t+1},
\tag{12}
\]
where
\[
 \beta_t=\frac{L\tanh(\alpha r(z^R_t)/L)}{r(z^R_t)}
\]
with continuous value $\alpha$ at the origin.  Moreover,
$0<\beta_t\le\alpha<1$ and $U_t\ge0$.
\end{lemma}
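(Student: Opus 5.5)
The plan is induction on $t$, using two elementary facts: the radial map is a scalar multiple of the centered input, and centering is linear. Since $a_s$ and $w$ are already centered (they are images of $\mathcal C$), any combination $U a_s+Vw$ is centered, so $\mathcal C(U a_s+Vw)=U a_s+Vw$. The base case is $z^R_0=0=0\cdot a_s+0\cdot w$, so we set $U_0=V_0=0$.

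For the inductive step, suppose $z^R_t=U_ta_s+V_tw$ with $U_t,V_t$ measurable with respect to $Z_{1:t}$. Because $z^R_t$ is centered, $R_q(z^R_t)=\beta_t z^R_t$, where $\beta_t$ is exactly the multiplier in the statement; at $z^R_t=0$ we use the continuous value $\alpha$, consistent with $R_q(0)=0$. On $\mathcal N_s$ we have $\mathcal C g(X_{t+1})=a_s+wZ_{t+1}$ by (11); scores are defined modulo common shifts, so only the centered score matters. Then
\[
z^R_{t+1}=\mathcal C\bigl(\beta_tU_ta_s+\beta_tV_tw+a_s+wZ_{t+1}\bigr)=(\beta_tU_t+1)a_s+(\beta_tV_t+Z_{t+1})w,
\]
which is (12). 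Adaptedness follows because $\beta_t$ is a measurable function of $z^R_t$, and $z^R_t$ is determined by $Z_{1:t}$. One caveat: if $a_s$ and $w$ were linearly dependent, the coefficients would not be unique. The recursion still defines one valid choice, and that is all the statement requires.

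The bounds on $\beta_t$ come from elementary properties of $\tanh$. Set $x=\alpha r/L>0$ for $r>0$. Then $\beta_t=\alpha\tanh(x)/x$, and since $0<\tanh x<x$ for $x>0$ we get $0<\beta_t<\alpha$. At $r=0$, $\beta_t=\alpha$. The assumption $0<q<(K-1)/K$ gives $\alpha=1-Kq/(K-1)\in(0,1)$. Finally, $U_t\ge0$ by induction: $U_0=0$, and $U_{t+1}=\beta_tU_t+1\ge1>0$ because $\beta_t>0$.

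None of the steps is difficult. The only points needing care are that centered vectors are fixed by $\mathcal C$, that the origin is handled by the continuous value of $\beta$, and that the score's shift ambiguity is harmless.
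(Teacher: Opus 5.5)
Your proof is correct and follows the paper's argument. The radial step multiplies the centered state by the common scalar $\beta_t$; substituting the centered score $a_s+wZ_{t+1}$ from (11) gives (12) by induction; and $0<\tanh x\le x$ yields $0<\beta_t\le\alpha<1$ and hence $U_t\ge0$. Your remarks on adaptedness, the convention at the origin, and non-uniqueness of the coefficients when $a_s$ and $w$ are collinear (as for $K=2$) are accurate, and the paper leaves them implicit.
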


\par\noindent\emph{Proof.}
The radial transition multiplies every centered coordinate by the same
scalar $\beta_t$.  Substitution of (11) proves (12) by induction.
Positivity is immediate, while $\tanh x\le x$ gives
$\beta_t\le\alpha$.
\hfill$\square$\par

\subsection{One maximal event controls the adaptive noise}

Let $S_u=\sum_{k=1}^u Z_k$.

\begin{lemma}[Adaptive Abel bound]\label{lem:abel}
For every realized path and every $t\le H$,
\[
 |V_t|\le2\max_{u\le H}|S_u|.
\tag{13}
\]
\end{lemma}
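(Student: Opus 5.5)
We unroll the recursion (12) for $V_t$ and then apply summation by parts. Write $\pi_{k,t}=\prod_{m=k}^{t-1}\beta_m$, with the empty product $\pi_{t,t}=1$. Since $V_0=0$, induction on (12) gives the pathwise identity
\[
 V_t=\sum_{k=1}^{t}\pi_{k,t}\,Z_k .
\]
The weights $\pi_{k,t}$ are random and adapted, because $\beta_m$ depends on $Z_1,\ldots,Z_m$, so we cannot treat $V_t$ as a Gaussian sum with fixed coefficients. The argument therefore uses only deterministic, realization-by-realization properties of the weights. By Lemma~\ref{lem:two-scalar}, $0<\beta_m\le\alpha<1$ on every path. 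Hence for fixed $t$ the map $k\mapsto\pi_{k,t}$ is nondecreasing in $k$: each step $\pi_{k,t}=\beta_k\pi_{k+1,t}\le\pi_{k+1,t}$ removes a factor at most one. It also lies in $(0,1]$ and ends at $\pi_{t,t}=1$.

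Next we substitute $Z_k=S_k-S_{k-1}$ with $S_0=0$ and perform Abel summation:
\[
 V_t=\sum_{k=1}^t\pi_{k,t}(S_k-S_{k-1})
 =\pi_{t,t}S_t-\sum_{k=1}^{t-1}\bigl(\pi_{k+1,t}-\pi_{k,t}\bigr)S_k .
\]
Put $M=\max_{u\le H}|S_u|$. Every increment $\pi_{k+1,t}-\pi_{k,t}$ is nonnegative, so the sum telescopes in absolute value. This gives
\[
 |V_t|\le M+M\sum_{k=1}^{t-1}\bigl(\pi_{k+1,t}-\pi_{k,t}\bigr)=M\bigl(1+\pi_{t,t}-\pi_{1,t}\bigr)\le 2M .
\]
Because $t\le H$, each $|S_k|$ with $k\le t$ is at most $M$, and (13) follows simultaneously for all $t$ on every realized path. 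The case $t=0$ is trivial.

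The only step needing care is the monotonicity of the weights. That monotonicity rests entirely on $\beta_m\in(0,1)$, which Lemma~\ref{lem:two-scalar} supplies pathwise, including the continuous value $\alpha$ at $r=0$. This is what allows adaptivity to be ignored: the Abel bound never uses independence or predictability of the weights, only their ordering on the realized path. It is also why a single maximal inequality for the partial sums $S_u$ is enough later, with no union bound over time.
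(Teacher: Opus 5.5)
Your proposal is correct and matches the paper's proof essentially step for step: you unroll (12) into $V_t=\sum_{k=1}^t\gamma_{k,t}Z_k$ with pathwise nondecreasing weights in $[0,1]$ ending at $\gamma_{t,t}=1$, apply Abel summation with $S_0=0$, and bound the nonnegative coefficients by their telescoping sum. Your explicit bound $M(1+\pi_{t,t}-\pi_{1,t})\le2M$ is the paper's remark that the coefficients ``sum to at most one,'' written out.
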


\par\noindent\emph{Proof.}
Expanding (12) gives
\[
 V_t=\sum_{k=1}^t\gamma_{k,t}Z_k,\qquad
 \gamma_{k,t}=\prod_{\ell=k}^{t-1}\beta_\ell,\qquad
 \gamma_{t,t}=1.
\]
Although the weights are adaptive, they obey
$0\le\gamma_{1,t}\le\cdots\le\gamma_{t,t}=1$ pathwise.  Abel summation
therefore gives
\[
 V_t=S_t-\sum_{k=1}^{t-1}
       (\gamma_{k+1,t}-\gamma_{k,t})S_k.
\]
The coefficients in the second term are nonnegative and sum to at most
one, proving (13). This pathwise bound also applies to adaptive weights.
\hfill$\square$\par

\begin{lemma}[Gaussian maximal event]
Set
\[
 x_L=\sqrt{2H\log(2L^3)},\qquad
 \mathcal E_L=\left\{\max_{u\le H}|S_u|\le x_L\right\}.
\]
Then $\Pr(\mathcal E_L^c)\le L^{-3}$.  On $\mathcal E_L$,
\[
 \max_{t\le H}|V_t|\le M_L:=2x_L
 =O(\sqrt{L\log L}).
\tag{14}
\]
\end{lemma}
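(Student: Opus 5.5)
The probability bound follows from a maximal inequality for the Gaussian random walk $S_u=\sum_{k\le u}Z_k$; the second claim then follows from the Adaptive Abel bound (Lemma~\ref{lem:abel}).

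\textbf{Step 1: maximal inequality.} For $\lambda>0$ the process $M_u=\exp(\lambda S_u-\lambda^2u/2)$ is a positive martingale with mean one. On $\{\max_{u\le H}S_u\ge x\}$ we have $M_u\ge\exp(\lambda x-\lambda^2H/2)$ at the first hitting time $u\le H$. Doob's (Ville's) maximal inequality therefore gives
\[
\Pr\Bigl(\max_{u\le H}S_u\ge x\Bigr)\le\exp(-\lambda x+\lambda^2H/2).
\]
Choosing $\lambda=x/H$ yields the bound $e^{-x^2/(2H)}$. Equivalently, one can apply L\'evy's reflection principle, $\Pr(\max_{u\le H}S_u\ge x)\le2\Pr(S_H\ge x)$, together with the Gaussian tail bound. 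The exponential-martingale route matches the paper's description and avoids a factor $2$. Applying the same bound to $-S$ and taking a union over the two signs gives
\[
\Pr\Bigl(\max_{u\le H}|S_u|> x\Bigr)\le2e^{-x^2/(2H)}.
\]

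\textbf{Step 2: plug in $x_L$.} With $x_L^2=2H\log(2L^3)$ we get $e^{-x_L^2/(2H)}=1/(2L^3)$. Hence $\Pr(\mathcal E_L^c)\le2\cdot\frac{1}{2L^3}=L^{-3}$. No union bound over time is needed, since the martingale inequality already controls the whole horizon at once.

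\textbf{Step 3: noise bound.} On $\mathcal E_L$, Lemma~\ref{lem:abel} gives $|V_t|\le2\max_{u\le H}|S_u|\le2x_L=M_L$ simultaneously for every $t\le H$. Lemma~\ref{lem:abel} is pathwise, so it holds for these adaptive weights without further argument. By the preliminaries, $H\le C_HL$ for all sufficiently small $q$, so $x_L\le\sqrt{2C_HL(\log2+3\log L)}$. This gives $M_L=O(\sqrt{L\log L})$, with the constant depending only on the fixed $K$, the means, $\sigma$ and $c$ through $C_H$.

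The only delicate point is measure-theoretic. The event is stated conditionally on $\mathcal N_s$, so we must check that the $Z_k$ are i.i.d.\ standard Gaussians under that conditional law. This holds because, given the latent path, the emissions are independent, and $X_{t+1}=\mu_s+\sigma Z_{t+1}$ on $\mathcal N_s$. With that settled, the argument is a routine sub-Gaussian maximal bound and I expect no real obstacle.
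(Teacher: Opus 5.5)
Your proposal is correct and follows the paper's proof essentially step for step. Both use the exponential martingale with Doob's maximal inequality at $\lambda=x/H$, a union over the two signs, substitution of $x_L$, and then Lemma~\ref{lem:abel}. Your use of $H\le C_HL$ for the $O(\sqrt{L\log L})$ rate, and your check that the $Z_k$ are i.i.d.\ standard Gaussians under the law conditioned on $\mathcal N_s$, spell out details the paper leaves implicit.
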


\par\noindent\emph{Proof.}
For every $\lambda>0$,
$\exp(\lambda S_u-u\lambda^2/2)$ is a nonnegative martingale.  Doob's
maximal inequality, optimized at $\lambda=x/H$, yields
\[
 \Pr\!\left(\max_{u\le H}S_u>x\right)
 \le e^{-x^2/(2H)}.
\]
Apply the same argument to $-S_u$ and union only the two signs.  Substituting
$x=x_L$ and applying Lemma~\ref{lem:abel} proves the claim simultaneously over the full horizon.
\hfill$\square$\par

\subsection{Radial recovery when $K\ge3$}

Assume $K\ge3$, and let $P$ be Euclidean projection onto $w^\perp$.

\begin{lemma}[Noncollinear signal]
For every state $s$, $Pa_s\ne0$.
\end{lemma}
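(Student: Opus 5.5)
We argue by contradiction, using that $P$ is the orthogonal projection onto $w^\perp$. Then $Pa_s=0$ holds exactly when $a_s$ is a scalar multiple of $w$, say $a_s=\lambda w$ for some $\lambda\in\mathbb R$. Both vectors in (11) are centered, and the kernel of $\mathcal C$ is spanned by $\mathbf 1$. So $a_s=\lambda w$ is equivalent to the uncentered identity
\[
\frac{\mu_s\boldsymbol\mu}{\sigma^2}-\frac{\boldsymbol\mu^{\odot2}}{2\sigma^2}-\frac{\lambda\boldsymbol\mu}{\sigma}=b\mathbf 1
\]
for some scalar $b$. Written coordinatewise, this says that for every $j\in\{0,\ldots,K-1\}$,
\[
-\frac{\mu_j^2}{2\sigma^2}+\Bigl(\frac{\mu_s}{\sigma^2}-\frac{\lambda}{\sigma}\Bigr)\mu_j-b=0 .
\]

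Now consider the polynomial
\[
p(x)=-\frac{x^2}{2\sigma^2}+\Bigl(\frac{\mu_s}{\sigma^2}-\frac{\lambda}{\sigma}\Bigr)x-b .
\]
Its leading coefficient is $-1/(2\sigma^2)\neq0$, so $p$ is a nonzero polynomial of degree exactly two and has at most two real roots. The identity above says $p$ vanishes at every $\mu_0,\ldots,\mu_{K-1}$. Since $K\ge3$ and the means are pairwise distinct, these are at least three distinct roots, which is a contradiction. Hence $a_s$ is not collinear with $w$, and $Pa_s\neq0$.

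This also handles the degenerate case $w=0$. Then $Pa_s=a_s$, and $a_s=0$ corresponds to $\lambda=0$ in the argument above, which is already ruled out. (In fact $w\ne0$ anyway, since the means are not all equal.)

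The only step needing care is the reduction from equality of centered vectors to equality modulo $\mathbf 1$; it uses nothing beyond $\ker\mathcal C=\operatorname{span}(\mathbf 1)$. The argument needs no uniformity in $s$: each $s$ is handled separately, and since $K$ is fixed, $\min_s\|Pa_s\|>0$ is a constant depending only on the fixed parameters, which is how the lemma will be used in the radial barrier arguments.
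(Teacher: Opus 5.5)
Your proposal is correct and follows the paper's own argument. You reduce $Pa_s=0$ to $a_s=\lambda w$, lift this to an identity modulo $\mathbf 1$ via $\ker\mathcal C=\operatorname{span}(\mathbf 1)$, and note that a quadratic with leading coefficient $-1/(2\sigma^2)$ cannot vanish at $K\ge3$ distinct means; your explicit treatment of the kernel and of the case $w=0$ only spells out details the paper leaves implicit.
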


\par\noindent\emph{Proof.}
Distinct means imply $w\ne0$.  If $Pa_s=0$, centeredness would give
$a_s=\lambda w$.  Hence a constant $b$ would satisfy
\[
 \frac{\mu_s\mu_i}{\sigma^2}-\frac{\mu_i^2}{2\sigma^2}
 =\frac{\lambda\mu_i}{\sigma}+b
\]
for every $i$.  After clearing denominators, a quadratic with nonzero
quadratic coefficient would vanish at the $K\ge3$ distinct values
$\mu_i$, a contradiction.
\hfill$\square$\par

Set $D_s=\sqrt2\|Pa_s\|_2>0$, $A_s=r(a_s)$, and $W=r(w)$.  From
Lemma~\ref{lem:two-scalar},
\[
 D_sU_t\le r(z^R_t)\le A_sU_t+W|V_t|.
\tag{15}
\]

\begin{lemma}[Upper radial barrier]\label{lem:upper-barrier}
For each $s$, constants $C_U,C_r,q_s>0$ exist such that, on
$\mathcal E_L$ and for $q<q_s$,
\[
 U_t\le C_UL^{2/3},\qquad r(z^R_t)\le C_rL^{2/3}
\tag{16}
\]
for all $t\le H$.
\end{lemma}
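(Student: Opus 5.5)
The plan is a one-dimensional comparison argument for the signal coefficient $U_t$ that does not use the noise at all. The bound on $r(z^R_t)$ then follows from (15) together with the maximal event (14).

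\emph{Step 1: reduce to a scalar recursion.} The map $x\mapsto\tanh(x)/x$ is decreasing on $(0,\infty)$, so $\beta_t$ is a nonincreasing function of $r(z^R_t)$. By the lower bound in (15), $r(z^R_t)\ge D_sU_t$, and hence $\beta_t\le \alpha\,\tanh(\alpha D_sU_t/L)\big/(\alpha D_sU_t/L)$. Multiplying by $U_t\ge0$ (Lemma~\ref{lem:two-scalar}) gives
\[
U_{t+1}\le f(U_t)+1,\qquad f(u)=\frac{L}{D_s}\tanh\!\Bigl(\frac{\alpha D_s u}{L}\Bigr).
\]
This holds pathwise; it needs neither the block event beyond Lemma~\ref{lem:two-scalar} nor $\mathcal E_L$. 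The function $f$ is increasing in $u$.

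\emph{Step 2: cubic correction and a barrier.} For $0\le x\le 1$ we have $\tanh x\le x-x^3/5$, which follows from the alternating Taylor expansion or a direct check. We also have $\alpha\le1$, and $\alpha\ge1/2$ for small $q$ as noted in the preliminaries. Hence for $0\le u\le L/(\alpha D_s)$,
\[
f(u)\le u-\kappa_s\,\frac{u^3}{L^2},\qquad \kappa_s=\frac{D_s^2}{40}.
\]
Set $B=(L^2/\kappa_s)^{1/3}=\kappa_s^{-1/3}L^{2/3}$. For $q<q_s$ we have $B\le L/(\alpha D_s)$, since $L^{2/3}\ll L$. We then show by induction that $U_t\le B$ for all $t\le H$. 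The base case is $U_0=0$. For the step, monotonicity of $f$ gives $U_{t+1}\le f(B)+1\le B-\kappa_sB^3/L^2+1=B$. This yields $U_t\le C_UL^{2/3}$ with $C_U=\kappa_s^{-1/3}$. Note that the barrier is attained by the balance of the unit signal increment against the cubic term, which is exactly the $2/3$ exponent heuristic of Appendix~\ref{app:proof-anatomy}.

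\emph{Step 3: radius.} On $\mathcal E_L$, the upper bound in (15) and (14) give
\[
r(z^R_t)\le A_sC_UL^{2/3}+WM_L.
\]
Since $M_L=O(\sqrt{L\log L})=o(L^{2/3})$, shrinking $q_s$ yields $r(z^R_t)\le C_rL^{2/3}$ with, for example, $C_r=A_sC_U+1$.

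The only delicate point is Step 1. The comparison must go through the \emph{lower} radius bound $r\ge D_sU_t$, which is where $K\ge3$ and the noncollinearity lemma ($D_s>0$) enter, because the noise could otherwise shrink $r$ and inflate $\beta_t$. The comparison also relies on the monotonicity of $\tanh(x)/x$, so that replacing $r$ by the smaller $D_sU_t$ only enlarges $\beta_t$. Once that inequality and the monotonicity of $f$ are in place, the barrier induction is routine.
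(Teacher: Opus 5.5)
Your proposal is correct and follows the paper's proof essentially step for step. The shared steps are the comparison $\beta_tU_t\le (L/D_s)\tanh(\alpha D_sU_t/L)$, obtained from the monotonicity of $\tanh(x)/x$ and the lower bound in (15); a cubic-correction barrier at scale $L^{2/3}$, closed by induction using the monotonicity of the comparison map; and the radius bound from (15) and (14). The only differences are cosmetic: you use $\tanh x\le x-x^3/5$ where the paper uses $x-x^3/8$, and you place the barrier exactly at the balance point where the paper puts it strictly beyond.
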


\par\noindent\emph{Proof.}
The function $h(x)=\tanh(x)/x$, continuously extended at zero, decreases on
$[0,\infty)$.  Using the lower bound in (15),
\[
 \beta_tU_t
 =\alpha h(\alpha r(z^R_t)/L)U_t
 \le\frac{L}{D_s}\tanh(\alpha D_sU_t/L).
\]
Let $f_L(u)=1+(L/D_s)\tanh(\alpha D_su/L)$.  Choose $C_U$ so that
$D_s^2C_U^3>64$, and put $R=C_UL^{2/3}$.  For large $L$,
$\alpha D_sR/L\le1$.  Since
$\tanh x\le x-x^3/8$ on $[0,1]$,
\[
 f_L(R)-R
 \le1-(1-\alpha)R-\frac{\alpha^3D_s^2C_U^3}{8}
 \le1-\frac{D_s^2C_U^3}{64}<0.
\]
The map $f_L$ is increasing, $U_0\le R$, and
$U_{t+1}\le f_L(U_t)$, so induction on the first crossing yields $U_t\le R$.
The upper bound in (15), together with (14) and
$M_L=o(L^{2/3})$, gives the radius bound.
\hfill$\square$\par

\begin{lemma}[Lower memory after the initial transient]
There are constants $c_U,C_0,q_s'>0$ such that, on $\mathcal E_L$ and
for $q<q_s'$,
\[
 U_t\ge c_UL^{2/3}
\tag{17}
\]
whenever $C_0L^{2/3}\le t\le H$.
\end{lemma}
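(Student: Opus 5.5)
The plan is to obtain a uniform lower bound on the adaptive multiplier $\beta_t$ from the radius bound already available, and then compare $U_t$ with a deterministic linear recursion. Throughout I work on $\mathcal E_L$ and take $q<q_s$, so that Lemma~\ref{lem:upper-barrier} gives $r(z^R_t)\le C_rL^{2/3}$ for every $t\le H$.

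\emph{Step 1: lower bound on $\beta_t$.} Write $h(x)=\tanh(x)/x$, so that $\beta_t=\alpha\,h(\alpha r(z^R_t)/L)$. Since $h$ is decreasing on $[0,\infty)$ and $\alpha\le1$, the radius bound gives
\[
\beta_t\ge\alpha\,h\!\left(C_rL^{-1/3}\right).
\]
The elementary inequality $\tanh x\ge x-x^3/3$ for $x\ge0$ gives $h(x)\ge1-x^2/3$. Hence
\[
h\!\left(C_rL^{-1/3}\right)\ge 1-\tfrac{C_r^2}{3}L^{-2/3}.
\]
From (10), $1-\alpha=Kq/(K-1)\le Ke^{-L}$, which is $o(L^{-2/3})$. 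So there is a constant $\kappa>0$ (for instance $\kappa=C_r^2/3+1$) and a threshold $q_s'\le q_s$ such that for $q<q_s'$,
\[
\beta_t\ge b:=1-\kappa L^{-2/3}\in(0,1)\qquad\text{for all }t\le H.
\]

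\emph{Step 2: comparison recursion.} Lemma~\ref{lem:two-scalar} gives $U_t\ge0$, so (12) yields $U_{t+1}=\beta_tU_t+1\ge bU_t+1$. Starting from $U_0=0$, induction gives
\[
U_t\ge\sum_{k=0}^{t-1}b^k=\frac{1-b^t}{1-b}=\frac{L^{2/3}}{\kappa}\bigl(1-b^t\bigr).
\]
Now take $t\ge C_0L^{2/3}$ with $C_0=(\log 2)/\kappa$. Then
\[
b^t\le\exp(-\kappa L^{-2/3}t)\le\exp(-\kappa C_0)=\tfrac12.
\]
This gives (17) with $c_U=1/(2\kappa)$.

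\emph{Main obstacle.} The only delicate point is that $\beta_t$ is adaptive and is close to one only when the radius is $o(L)$. The argument therefore depends on having the radius bound simultaneously for all $t\le H$ on the single event $\mathcal E_L$. That is exactly what Lemma~\ref{lem:upper-barrier}, together with the Abel bound (14), supplies, so no further union bound over time is needed. The remaining choices of $\kappa$ and $q_s'$ are routine. If $H<C_0L^{2/3}$, the statement is vacuous; in any case $H=\Theta(L)$ exceeds $C_0L^{2/3}$ for small $q$.
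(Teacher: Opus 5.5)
Your proposal is correct and follows the paper's proof essentially step for step. You lower-bound $\beta_t$ via $\tanh x\ge x-x^3/3$ and the radius bound of Lemma~\ref{lem:upper-barrier}, compare $U_t$ with the linear recursion $u_{t+1}=bu_t+1$, and take $t\ge C_0L^{2/3}$ so that $b^t\le 1/2$. The only cosmetic difference is that you fold the $Kq/(K-1)$ term into a single rate $\kappa L^{-2/3}$ at the outset, whereas the paper keeps $\varepsilon_L$ and sandwiches it between $b_1L^{-2/3}$ and $b_2L^{-2/3}$.
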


\par\noindent\emph{Proof.}
The global inequality $\tanh x\ge x-x^3/3$ and
Lemma~\ref{lem:upper-barrier} imply
\[
 \beta_t\ge
 \alpha-\frac{\alpha^3r(z^R_t)^2}{3L^2}
 \ge1-\varepsilon_L,\qquad
 \varepsilon_L=\frac{Kq}{K-1}+\frac{C_r^2}{3L^{2/3}}.
\tag{18}
\]
For small $q$, $0<\varepsilon_L<1$ and
$b_1L^{-2/3}\le\varepsilon_L\le b_2L^{-2/3}$ for positive constants
$b_1,b_2$.  Iterating (12) gives
\[
 U_t\ge\frac{1-(1-\varepsilon_L)^t}{\varepsilon_L}.
\]
For $t\ge\lceil(\log2)/\varepsilon_L\rceil$, the numerator is at least
$1/2$, which proves (17).  This transient is $O(L^{2/3})$ and is smaller
than $H=\Theta(L)$.
\hfill$\square$\par

\begin{lemma}[Simultaneous radial recovery]\label{lem:radial-recovery}
For every $s$, constants $c_s,q_s''>0$ exist such that, on
$\mathcal E_L$ and for $q<q_s''$,
\[
 z^R_{t,s}-z^R_{t,j}\ge c_sL^{2/3}
\tag{19}
\]
for every $j\ne s$ and $C_0L^{2/3}\le t\le H$.
\end{lemma}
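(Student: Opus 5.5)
The plan is to use the two-scalar representation of Lemma~\ref{lem:two-scalar}, $z^R_t=U_ta_s+V_tw$, so that the margin splits into a signal term and a noise term. Since the margins are differences of coordinates, centering does not affect them, and a direct computation from (11) gives
\[
 z^R_{t,s}-z^R_{t,j}=U_t\,(a_{s,s}-a_{s,j})+V_t\,(w_s-w_j).
\]
For the signal coefficient,
\[
 a_{s,s}-a_{s,j}=\frac{\mu_s^2-\mu_s\mu_j}{\sigma^2}-\frac{\mu_s^2-\mu_j^2}{2\sigma^2}=\frac{(\mu_s-\mu_j)^2}{2\sigma^2}=d_{sj}.
\]
For the noise coefficient, $w_s-w_j=(\mu_s-\mu_j)/\sigma=\eta_{sj}$. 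Hence $z^R_{t,s}-z^R_{t,j}=d_{sj}U_t+\eta_{sj}V_t$, with $d_{sj}\ge d_{\min}>0$ by the distinct-means assumption.

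Next I bound the two terms separately on $\mathcal E_L$ for $C_0L^{2/3}\le t\le H$. The lower memory lemma (17) gives $d_{sj}U_t\ge d_{\min}c_UL^{2/3}$. By (14), $|\eta_{sj}V_t|\le \eta_{\max}M_L$, where $\eta_{\max}=\max_{i\ne j}|\mu_i-\mu_j|/\sigma$ is a constant depending only on the fixed means. Because $M_L=O(\sqrt{L\log L})=o(L^{2/3})$, there is a threshold $q_s''\le\min(q_s,q_s')$ below which $\eta_{\max}M_L\le\tfrac12 d_{\min}c_UL^{2/3}$. Taking $c_s=\tfrac12 d_{\min}c_U$ then proves (19) simultaneously for all $j\ne s$ and all $t$ in the window. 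The union over $j$ is finite and involves no probability, since every bound holds pathwise on the single event $\mathcal E_L$.

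The only genuine input is the lower memory bound (17), and it is already available. Its validity needs the upper barrier (16), which in turn uses the noncollinearity $Pa_s\ne0$; this is why the argument is restricted to $K\ge3$. So the main point to watch is quantifier bookkeeping rather than analysis. The constants $c_U$ and $C_0$ must not depend on $j$ or $t$; the threshold must be the minimum of those from the earlier lemmas and the $o(L^{2/3})$ comparison; and the window $C_0L^{2/3}\le t\le H$ must be nonempty, which holds because $H=\Theta(L)$.
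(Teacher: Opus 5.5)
Your proposal is correct and follows essentially the same route as the paper. Both write the margin as $d_{sj}U_t+\eta_{sj}V_t$, bound the signal term below by the lower memory bound (17), and show via (14) that the noise term is $O(\sqrt{L\log L})=o(L^{2/3})$. You additionally make the constant explicit, $c_s=\tfrac12 d_{\min}c_U$, and spell out the threshold bookkeeping that the paper leaves implicit.
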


\par\noindent\emph{Proof.}
Direct calculation gives
\[
 (a_s)_s-(a_s)_j
 =d_{sj}:=\frac{(\mu_s-\mu_j)^2}{2\sigma^2}>0,\qquad
 w_s-w_j=\eta_{sj}:=\frac{\mu_s-\mu_j}{\sigma}.
\]
Thus
$z^R_{t,s}-z^R_{t,j}=d_{sj}U_t+\eta_{sj}V_t$.  The first term is
bounded below by a positive multiple of $L^{2/3}$, uniformly over the
finitely many competitors, whereas the second is
$O(\sqrt{L\log L})=o(L^{2/3})$ by (14).
\hfill$\square$\par

\subsection{Recovery of the exact filter by path mixtures}

Continue to condition on $\mathcal N_s$.  The exact posterior after $H$
observations is the normalized sum of prior times likelihood over latent paths.
Use the constant path $(s,\ldots,s)$ as reference; its prior mass is
\[
 \pi_*=\frac1K(1-q)^{H-1}.
\tag{20}
\]

\begin{lemma}[Constant wrong paths]\label{lem:constant-wrong}
There are $\kappa,q_s>0$ such that, for $q<q_s$, with conditional
probability at least $1-(K-1)e^{-\kappa L}$, every constant path
$j\ne s$ has likelihood ratio relative to the reference at most
$e^{-L/2}$.
\end{lemma}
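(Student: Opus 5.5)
Conditional on $\mathcal N_s$, the observations are $X_{t+1}=\mu_s+\sigma Z_{t+1}$ for $t=0,\dots,H-1$ with independent standard Gaussians $Z_t$. For a competitor $j\ne s$, the log likelihood ratio of the constant path $(j,\dots,j)$ against the reference $(s,\dots,s)$ has prior part $\log(\pi_j/\pi_*)=0$, because both paths have prior $\tfrac1K(1-q)^{H-1}$. Its data part is a sum of per-step Gaussian log-ratios. Each term equals $g(X_{t+1})_j-g(X_{t+1})_s$, which by the computation in Lemma~\ref{lem:radial-recovery} is $-d_{sj}-\eta_{sj}Z_{t+1}$. The plan is therefore to write
\[
\Lambda_j:=\log\frac{\text{lik}(j,\dots,j)}{\text{lik}(s,\dots,s)}=-d_{sj}H-\eta_{sj}S_H,
\]
where $S_H\sim\mathcal N(0,H)$ is the Gaussian partial sum already used in Lemma~\ref{lem:abel}. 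Since $d_{sj}\ge d_{\min}>c$, the target event $\Lambda_j\le -L/2$ reduces to the one-dimensional Gaussian bound $\eta_{sj}S_H\ge -d_{sj}H+L/2$.

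Next I compare $H$ with $L$. From $H=\lceil T/c\rceil+1\ge T/c$ and $L=T+O(1)$ by (10), we get $d_{sj}H\ge (d_{\min}/c)\,L-O(1)$. Fix $\rho=d_{\min}/c>1$. For small $q$ the deviation we must exclude is $|\eta_{sj}S_H|> d_{sj}H-L/2\ge (\rho-\tfrac12)L-O(1)$, which is at least $\tfrac12 d_{sj}H$ once $q$ is small, using $d_{sj}H\ge\rho L-O(1)$ and $\rho>1$ (the right side is $\ge(1-\tfrac1{2\rho})d_{sj}H-O(1)$). The Gaussian tail then gives
\[
\Pr\bigl(|\eta_{sj}S_H|>\tfrac12 d_{sj}H\bigr)\le 2\exp\!\Bigl(-\frac{d_{sj}^2H}{8\eta_{sj}^2}\Bigr)=2\exp\!\Bigl(-\frac{d_{sj}H}{4}\Bigr),
\]
because $d_{sj}/\eta_{sj}^2=\tfrac12$. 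Since $d_{sj}H\ge\rho L-O(1)\ge L$ for small $q$, this is at most $2e^{-L/4}$. A one-sided bound is actually enough, and it removes the factor 2: $e^{-d_{sj}H/4}$. Taking $\kappa$ slightly below $\rho/4$ (for instance $\kappa=\tfrac14$, with a margin absorbing the $O(1)$ terms) gives a per-competitor failure probability of at most $e^{-\kappa L}$. A union over the $K-1$ competitors then yields the stated probability $1-(K-1)e^{-\kappa L}$.

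The only delicate step is the bookkeeping that turns $c<d_{\min}$ into a strict linear-in-$L$ surplus $d_{sj}H-L/2\ge \epsilon L$. This is exactly where the hypothesis $c<d_{\min}$ enters: it makes $H d_{sj}$ exceed $L$ by a constant factor, so the $L/2$ target is met with room to spare. I will fix the threshold $q_s$ so that the $O(1)$ corrections in $L=T+O(1)$ and in the ceiling defining $H$ are dominated, and choose $\kappa$ independent of $q$; both depend only on the fixed $K$, the means, $\sigma$ and $c$, as the paper's quantifier convention permits. The rest is a direct Gaussian tail computation.
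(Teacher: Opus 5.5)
Your proof is correct and follows the paper's route: each constant wrong path has log-likelihood ratio $-d_{sj}H-\eta_{sj}S_H$, Gaussian with mean $-d_{sj}H$ and variance $2d_{sj}H$; the hypothesis $c<d_{\min}$ together with $L=T+O(1)$ gives a surplus linear in $L$; and a Gaussian lower-tail bound plus a union over the $K-1$ competitors finishes. There is one arithmetic slip, which is harmless because the lemma only asserts some $\kappa>0$: since $\eta_{sj}^2=2d_{sj}$, the exponent $d_{sj}^2H/(8\eta_{sj}^2)$ equals $d_{sj}H/16$, not $d_{sj}H/4$, so your argument gives $\kappa=1/16$ rather than $1/4$, and no rate near $\rho/4$ is available when $\rho=d_{\min}/c$ is close to $1$.
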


\par\noindent\emph{Proof.}
For $j\ne s$, let
\[
 Y_t^{s,j}=\log
 \frac{\varphi_{\mu_s,\sigma}(X_t)}
      {\varphi_{\mu_j,\sigma}(X_t)}.
\]
Under $\mathcal N_s$, these variables are independent Gaussians with mean
$d_{sj}$ and variance $2d_{sj}$.  Since
$Hd_{sj}\ge(d_{\min}/c)T$ with $d_{\min}/c>1$, while $L=T+O(1)$, a
bound on the Gaussian lower tail gives
\[
 \Pr\!\left(\sum_{t=1}^H Y_t^{s,j}<L/2\right)
 \le e^{-\kappa_{sj}L}
\]
for small $q$.  A finite union over $j\ne s$ proves the claim.
\hfill$\square$\par

\begin{lemma}[Bayes factor of a switched path]\label{lem:switch-bayes}
Let $\mathcal P_{\rm sw}$ be the latent paths containing at least one switch,
let $\pi(p)$ denote their prior masses, and let $\mathrm{LR}_p$ be likelihood
relative to the constant-$s$ path.  For
\[
 B_{\rm sw}=
 \frac{\sum_{p\in\mathcal P_{\rm sw}}\pi(p)\mathrm{LR}_p}{\pi_*},
\]
\[
 \mathbb E[B_{\rm sw}\mid\mathcal N_s]
 \le\frac{KHq}{(1-q)^{H-1}},
\qquad
 \Pr(B_{\rm sw}>e^{-L/2}\mid\mathcal N_s)
 \le CLe^{-L/2}.
\tag{21}
\]
\end{lemma}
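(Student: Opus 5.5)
The plan is a first-moment computation followed by Markov's inequality. The key fact is that, under $\mathcal N_s$, every path's likelihood ratio relative to the constant-$s$ path is a nonnegative random variable with mean exactly one.

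\emph{Step 1: each likelihood ratio has mean one.} Conditional on $\mathcal N_s$, the observations $X_1,\ldots,X_H$ are independent with $X_{t+1}\sim\mathcal N(\mu_s,\sigma^2)$. For a latent path $p=(p_0,\ldots,p_{H-1})$, with the indexing of Definition~\ref{def:setup} ($X_{t+1}$ is emitted from $S_t$), the ratio is
\[
 \mathrm{LR}_p=\prod_{t=0}^{H-1}\frac{\varphi_{\mu_{p_t},\sigma}(X_{t+1})}{\varphi_{\mu_s,\sigma}(X_{t+1})}.
\]
Each factor is a ratio of two Gaussian densities evaluated at a draw from the denominator density, so it has conditional mean $\int\varphi_{\mu_{p_t},\sigma}=1$. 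By independence of the factors, $\mathbb E[\mathrm{LR}_p\mid\mathcal N_s]=1$ for every path $p$, and the constant-$s$ path contributes $1$ trivially.

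\emph{Step 2: the expectation bound.} All terms are nonnegative and there are finitely many paths, so expectation commutes with the sum. This gives
\[
 \mathbb E[B_{\rm sw}\mid\mathcal N_s]=\frac{1}{\pi_*}\sum_{p\in\mathcal P_{\rm sw}}\pi(p)=\frac{\Pr(\text{prior path has a switch among }S_0,\ldots,S_{H-1})}{\pi_*}.
\]
A union bound over the $H-1$ transitions bounds the numerator by $(H-1)q\le Hq$. Dividing by $\pi_*=K^{-1}(1-q)^{H-1}$ from (20) yields the first inequality in (21).

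\emph{Step 3: the tail bound.} Markov's inequality gives
\[
 \Pr(B_{\rm sw}>e^{-L/2}\mid\mathcal N_s)\le e^{L/2}\frac{KHq}{(1-q)^{H-1}}.
\]
Now insert the preliminaries (10). We have $q\le(K-1)e^{-L}$ and $H\le C_HL$, and $(1-q)^{-(H-1)}\to1$, so this factor is at most $2$ for small $q$. The right side is therefore at most $2K(K-1)C_H\,L\,e^{-L/2}$, which is the claimed $CLe^{-L/2}$ with $C$ depending only on the fixed $K$ and $c$.

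\emph{Main obstacle.} There is little real difficulty. The only care needed is to make sure the prior $\pi(p)$ is the unconditional stationary path prior that enters the exact posterior, and not the law conditioned on $\mathcal N_s$. The conditioning on $\mathcal N_s$ affects only the distribution of the observations, and that is exactly what makes each $\mathrm{LR}_p$ a mean-one martingale-type weight. Consistent alignment of $S_t$ with $X_{t+1}$ is the other point to check, so that the number of possible transitions is $H-1$ and matches the exponent in $\pi_*$.
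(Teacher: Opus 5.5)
Your proposal is correct and follows essentially the same route as the paper's proof. Mean-one likelihood ratios under the reference (constant-$s$) observation law reduce $\mathbb E[B_{\rm sw}\mid\mathcal N_s]$ to the switched prior mass, at most $Hq$, divided by $\pi_*$; Markov's inequality at threshold $e^{-L/2}$ together with (10) then gives the $CLe^{-L/2}$ tail bound.
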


\par\noindent\emph{Proof.}
Under the reference observation law, every likelihood ratio has expectation
one.  The numerator in the first expectation is therefore the prior mass of
paths that contain a switch, at most $Hq$.  Division by (20) gives the first
bound.  Markov's inequality at threshold $e^{-L/2}$, followed by (10),
gives the second.
\hfill$\square$\par

\begin{lemma}[Exact posterior margin]\label{lem:exact-margin}
Conditional on $\mathcal N_s$, outside an event of probability at most
\[
 (K-1)e^{-\kappa L}+CLe^{-L/2},
\tag{22}
\]
the exact terminal logits satisfy
\[
 z^E_{H,s}-z^E_{H,j}\ge L/2-\log K
\tag{23}
\]
for every $j\ne s$.
\end{lemma}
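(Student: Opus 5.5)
We intersect the good events of Lemmas~\ref{lem:constant-wrong} and~\ref{lem:switch-bayes}. The union bound on their complements gives exactly the probability (22). On the intersection we then bound each posterior ratio directly from the path-mixture representation.

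\textbf{Step 1: write the posterior as a ratio of path sums.} The exact terminal logits $z^E_H$ are, up to a common shift, the logarithms of the unnormalized posterior masses
\[
m_j=\sum_{p:\,p_{H-1}=j}\pi(p)\,\mathrm{LR}_p,
\]
with likelihood ratios taken relative to the constant-$s$ reference path. Centering does not affect differences, so $z^E_{H,s}-z^E_{H,j}=\log(m_s/m_j)$. This identification rests on the fact that the recursion in Definition~\ref{def:setup} implements the forward algorithm, with $p^E_t$ the filtered posterior of the terminal latent state.

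\textbf{Step 2: bound numerator and denominator.} For the numerator, the reference path alone contributes $\pi_*\cdot1$, so $m_s\ge\pi_*$.

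For the denominator with $j\ne s$, split the paths ending in $j$ into two groups:
\begin{itemize}
\item The constant path $(j,\dots,j)$ has prior $\pi_*$, because the prior of a constant path does not depend on the state. On the event of Lemma~\ref{lem:constant-wrong} it contributes at most $\pi_*e^{-L/2}$.
\item Every other path ending in $j$ contains a switch. These contribute at most $\pi_*B_{\rm sw}$, which is at most $\pi_*e^{-L/2}$ on the event of Lemma~\ref{lem:switch-bayes}.
\end{itemize}
Hence $m_j\le 2\pi_*e^{-L/2}$, and
\[
z^E_{H,s}-z^E_{H,j}\ge L/2-\log2\ge L/2-\log K,
\]
since $K\ge2$. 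This holds simultaneously for every $j\ne s$.

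\textbf{Step 3: check the indexing.} There is one subtle point, which is the main thing to verify. Under the paper's convention, $p^E_H$ is the posterior of $S_{H-1}$ given $X_{1:H}$, with $X_{t+1}$ emitted from $S_t$. The latent paths in question are therefore $(S_0,\dots,S_{H-1})$, and the path prior and likelihoods must be matched to this indexing.

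The prior of a constant path is $\frac1K(1-q)^{H-1}$, which matches (20). Under this indexing, the reference law used in Lemma~\ref{lem:switch-bayes} is precisely the conditional law given $\mathcal N_s$. The identity $\mathbb E[\mathrm{LR}_p\mid\mathcal N_s]=1$ is then legitimate, because each $\mathrm{LR}_p$ is a ratio of Gaussian product densities over the same $H$ observations.

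I expect no genuine obstacle beyond this bookkeeping. The looser constant $\log K$ in (23) leaves slack in case one prefers to bound the constant-$j$ and switched contributions together, or to avoid the factor $2$.
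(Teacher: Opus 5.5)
Your proposal is correct and follows the paper's proof. You intersect the events of Lemmas~\ref{lem:constant-wrong} and~\ref{lem:switch-bayes}, use the constant-$s$ path weight $\pi_*$ as the reference, and bound the competing weight by the constant wrong paths plus $\pi_* B_{\rm sw}$. The only difference is bookkeeping: the paper bounds each competitor by the entire nonreference weight $(K-1)\pi_*e^{-L/2}+\pi_*e^{-L/2}=K\pi_*e^{-L/2}$, which is where its $\log K$ comes from, whereas you keep only the paths ending in $j$ and get the slightly sharper $L/2-\log 2$.
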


\par\noindent\emph{Proof.}
On the intersection of the events in
Lemmas~\ref{lem:constant-wrong} and~\ref{lem:switch-bayes}, total
nonreference weight divided by reference weight is at most
$Ke^{-L/2}$.  Hence
$p^E_H(s)/p^E_H(j)\ge e^{L/2}/K$ for every $j\ne s$, and taking
logarithms gives (23).
\hfill$\square$\par

\subsection{Categorical KL bounds}

For vectors $u,v$, put
$\Delta=u-v$ and
$\operatorname{range}(\Delta)=\max_i\Delta_i-\min_i\Delta_i$.

\begin{lemma}[Range envelope]
For every $u,v\in\mathbb R^K$,
\[
 D_{\rm KL}(\operatorname{softmax}u\|
             \operatorname{softmax}v)
 \le\operatorname{range}(u-v).
\tag{24}
\]
\end{lemma}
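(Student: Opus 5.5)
The plan is to prove the KL bound by a one-dimensional integral representation along the segment joining the two logit vectors, as in the mean-value argument hidden behind any log-sum-exp estimate. Write $p=\operatorname{softmax}u$, $\Delta=u-v$, and $A(x)=\log\sum_i e^{x_i}$. Then $\log p_i-\log(\operatorname{softmax}v)_i=\Delta_i-(A(u)-A(v))$, so
\[
 D_{\rm KL}(\operatorname{softmax}u\|\operatorname{softmax}v)
 =\sum_i p_i\Delta_i-\bigl(A(u)-A(v)\bigr).
\]
This identity is the starting point. It expresses the KL as a first-order Taylor remainder of the convex function $A$: since $\nabla A(u)=p$, the KL equals $A(v)-A(u)-\langle\nabla A(u),v-u\rangle\ge0$.

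Next, both terms are bounded using only the coordinates of $\Delta$. By the mean value theorem applied to $t\mapsto A(v+t\Delta)$, we have $A(u)-A(v)=\sum_i \tilde p_i\Delta_i$ for some probability vector $\tilde p$, namely the softmax at an intermediate point. Hence
\[
 D_{\rm KL}=\sum_i p_i\Delta_i-\sum_i\tilde p_i\Delta_i\le\max_i\Delta_i-\min_i\Delta_i=\operatorname{range}(\Delta),
\]
because each weighted average of the $\Delta_i$ lies in $[\min_i\Delta_i,\max_i\Delta_i]$. Alternatively, without invoking the mean value theorem, one can use the elementary bounds $\min_i\Delta_i\le A(u)-A(v)\le\max_i\Delta_i$. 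These follow from $e^{u_i}=e^{v_i}e^{\Delta_i}$, which gives $e^{\min\Delta}\sum e^{v_i}\le\sum e^{u_i}\le e^{\max\Delta}\sum e^{v_i}$.

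Finally, I would note that the bound is invariant under common shifts of $u$ or $v$, as it must be, since $\operatorname{range}$ ignores additive constants. This is what lets it serve as the pathwise $2L$ envelope later. Combined with the saturation $r(R_q(z))<L$ and the corresponding boundedness of centered exact mixing, it bounds the terminal range of $z^E-z^R$ by the range of the difference of the transition outputs, because the shared score $g(X)$ cancels.

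I do not expect a real obstacle. The only care needed is getting the sign of the identity right, so that the KL is $\sum_i p_i\Delta_i$ minus the log-partition difference, with $p$ being the first argument's softmax.
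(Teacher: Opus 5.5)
Your proposal is correct and is essentially the paper's argument. The paper also uses that $A(u)-A(v)$ lies between $\min_i\Delta_i$ and $\max_i\Delta_i$ (your elementary alternative), bounds each log probability ratio $\Delta_i-(A(u)-A(v))$ by $\operatorname{range}(\Delta)$, and averages under $\operatorname{softmax}u$; your choice to average first and then bound, or to use the mean value theorem instead, changes nothing of substance.
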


\par\noindent\emph{Proof.}
Let $A(z)=\log\sum_i e^{z_i}$.  The increment $A(u)-A(v)$ lies between
$\min_i\Delta_i$ and $\max_i\Delta_i$.  Thus every log probability ratio
is at most $\operatorname{range}(\Delta)$, and averaging under
$\operatorname{softmax}(u)$ proves (24).
\hfill$\square$\par

\begin{lemma}[Terminal envelope]\label{lem:envelope}
For every observation path,
\[
 D_{\rm KL}(\operatorname{softmax}z^E_H\|
             \operatorname{softmax}z^R_H)\le2L.
\tag{25}
\]
\end{lemma}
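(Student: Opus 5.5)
The plan is to reduce the terminal KL to Lemma (Range envelope), (24), and then bound the range of the logit difference using only the last transition step. The point is that the unbounded Gaussian score cancels. By Definition~\ref{def:setup}, both filters add the same observation score $g(X_H)$ at the final step, so
\[
 z^E_H-z^R_H=\mathcal C\bigl(\Phi_q(z^E_{H-1})-R_q(z^R_{H-1})\bigr).
\]
This uses $H=H(q)\ge2\ge1$. Centering subtracts a common constant from every coordinate, and the range is invariant under common shifts. Hence
\[
\operatorname{range}(z^E_H-z^R_H)=\operatorname{range}\bigl(\Phi_q(z^E_{H-1})-R_q(z^R_{H-1})\bigr)\le\operatorname{range}\Phi_q(z^E_{H-1})+\operatorname{range}R_q(z^R_{H-1}),
\]
using $\operatorname{range}(u-v)\le\operatorname{range}(u)+\operatorname{range}(v)$. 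It then suffices to show that each map has range at most $L$ at every input.

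For the exact map, write $p=\operatorname{softmax}(z)$. Each coordinate of $P_q^\top p$ is a convex combination of the entries $1-q$ and $q/(K-1)$ of a column of $P_q$, so it lies in $[q/(K-1),\,1-q]$. This interval is nonempty because $q<(K-1)/K$. Taking logarithms and ignoring the centering shift, any two coordinates of $\Phi_q(z)$ differ by at most $\log\frac{(K-1)(1-q)}{q}=L$.

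For the radial map, $y=R_q(z)$ is centered and satisfies $r(y)=L\tanh(\alpha r(z)/L)<L$; at $z=0$ we have $y=0$. For any $i,j$,
\[
(y_i-y_j)^2\le2(y_i^2+y_j^2)\le2\|y\|_2^2,
\]
so $\operatorname{range}(y)\le\sqrt2\|y\|_2=r(y)<L$.

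Combining the two bounds with (24) gives the $2L$ envelope for every observation path. I do not expect a real obstacle here. The one step that needs care is recognizing that the bound must go through the last transition rather than through the terminal logits themselves, because the Gaussian score is unbounded and only cancels in the difference. The other checks are routine: $H\ge1$, so a final shared step exists; the lower bound $q/(K-1)$ on the mixed probabilities; and the elementary inequality relating range to the $\sqrt2\|\cdot\|_2$ normalization of $r$.
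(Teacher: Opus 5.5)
Your proposal is correct and follows the paper's proof essentially step for step. You cancel the shared terminal score so that $z^E_H-z^R_H=\Phi_q(z^E_{H-1})-R_q(z^R_{H-1})$, bound the pairwise diameter of the exact image by $L$ via $P_q^\top p\in[q/(K-1),1-q]^K$ and that of the radial image by $r<L$, and conclude with the range envelope (24). You also spell out two details the paper leaves implicit: $\operatorname{range}(u-v)\le\operatorname{range}(u)+\operatorname{range}(v)$, and $|y_i-y_j|\le\sqrt2\|y\|_2=r(y)$ for centered $y$.
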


\par\noindent\emph{Proof.}
Let $m^E=\Phi_q(z^E_{H-1})$ and
$m^R=R_q(z^R_{H-1})$ be the logits before the observation at the terminal
step.  The final shared score cancels, so
$z^E_H-z^R_H=m^E-m^R$.  Exact mixing places every probability
coordinate in $[q/(K-1),1-q]$, hence every exact pairwise score before the observation
difference has magnitude at most $L$.  The radial image has pairwise
diameter strictly below $L$ because its radial norm is below $L$.
Therefore $\operatorname{range}(m^E-m^R)\le2L$, and (24) applies.
\hfill$\square$\par

\begin{lemma}[Curvature on a common cone]\label{lem:curvature}
Suppose a state $s$ and $m\ge0$ satisfy
\[
 u_s-u_j\ge m,\qquad v_s-v_j\ge m
\]
for every $j\ne s$, and $\operatorname{range}(u-v)\le R$.  Then
\[
 D_{\rm KL}(\operatorname{softmax}u\|
             \operatorname{softmax}v)
 \le K(K-1)R^2e^{-m}.
\tag{26}
\]
\end{lemma}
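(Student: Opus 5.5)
The plan is to write the categorical KL as a Bregman divergence of the log-partition function $A(z)=\log\sum_i e^{z_i}$. Then control it with an integral-form Taylor remainder along the segment joining $u$ and $v$. Set $\Delta=v-u$ and $w(\lambda)=u+\lambda\Delta$ for $\lambda\in[0,1]$. The standard identity
\[
 D_{\rm KL}(\operatorname{softmax}u\|\operatorname{softmax}v)
 =A(v)-A(u)-\nabla A(u)^\top\Delta
 =\int_0^1(1-\lambda)\,\Delta^\top\nabla^2A(w(\lambda))\,\Delta\,d\lambda
\]
holds because $\nabla A(u)=\operatorname{softmax}u$. It also uses that $\log(p_u(i)/p_v(i))=u_i-v_i-A(u)+A(v)$, averaged under $p_u$.

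Next, I will use that $\nabla^2A(w)=\operatorname{diag}(p)-pp^\top$ with $p=\operatorname{softmax}w$. Hence $\Delta^\top\nabla^2A(w)\Delta=\operatorname{Var}_{p}(\Delta_I)$, where $I\sim p$. A variance is at most the second moment about any constant, so I choose the constant $\Delta_s$. This gives
\[
 \operatorname{Var}_p(\Delta_I)\le\sum_{j\ne s}p(j)(\Delta_j-\Delta_s)^2\le R^2\,(1-p(s)),
\]
since $|\Delta_j-\Delta_s|\le\operatorname{range}(\Delta)=\operatorname{range}(u-v)\le R$.

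The key geometric step, which I expect to be the only nontrivial point, is that the common confidence cone is convex. The margin $w_s(\lambda)-w_j(\lambda)$ is affine in $\lambda$ and is at least $m$ at both endpoints, so it is at least $m$ along the entire segment. Therefore, for every $\lambda$,
\[
 1-p_\lambda(s)=\sum_{j\ne s}p_\lambda(j)\le\sum_{j\ne s}\frac{p_\lambda(j)}{p_\lambda(s)}=\sum_{j\ne s}e^{-(w_s(\lambda)-w_j(\lambda))}\le(K-1)e^{-m}.
\]

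Combining the three displays and using $\int_0^1(1-\lambda)\,d\lambda=1/2$ yields
\[
 D_{\rm KL}\le\tfrac12(K-1)R^2e^{-m}\le K(K-1)R^2e^{-m},
\]
which is (26), with room to spare in the constant. No case split on $K$ is needed, and the bound is trivially consistent when $m=0$ because it then exceeds the range envelope (24) only by a constant factor.
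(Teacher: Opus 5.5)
Your proof is correct and follows the paper's route: the KL is the Bregman divergence of log-sum-exp, controlled by the integral Taylor remainder along the segment from $u$ to $v$, with the margin inequalities preserved on the segment so that $1-p_\lambda(s)\le(K-1)e^{-m}$ throughout. The only difference is the bound on the quadratic form. The paper combines $\|\operatorname{diag}(p)-pp^\top\|_{\rm op}\le 2(K-1)e^{-m}$ with $\|\mathcal C(u-v)\|_2^2\le KR^2$, while your variance estimate about $\Delta_s$, $\operatorname{Var}_p(\Delta_I)\le R^2(1-p(s))$, saves a factor $2K$ and gives the sharper constant $\tfrac12(K-1)$. Your closing remark on $m=0$ is inaccurate, since the bound is quadratic in $R$ while the envelope (24) is linear, but nothing depends on it.
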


\par\noindent\emph{Proof.}
Every point $u_\tau=(1-\tau)u+\tau v$ has the same margin lower bound.
If $p_\tau=\operatorname{softmax}(u_\tau)$, then
$1-p_\tau(s)\le(K-1)e^{-m}$.  The Hessian of log-sum-exp is
$\operatorname{diag}(p_\tau)-p_\tau p_\tau^\top$, so its operator norm
is at most $2(K-1)e^{-m}$.  The Hessian annihilates common shifts, and
$\operatorname{range}(u-v)\le R$ implies
$\|\mathcal C(u-v)\|_2^2\le KR^2$.  The integral Taylor formula for
the corresponding Bregman divergence now gives (26).
\hfill$\square$\par

\subsection{Binary specialization}

When $K=2$, let
\[
 m_0=\frac{\mu_0+\mu_1}{2},\qquad
 \delta=\frac{|\mu_1-\mu_0|}{2}.
\]
After relabeling if needed and translating observations by $m_0$, the
emission means are $-\delta,+\delta$.  This preserves the likelihood ratio,
the latent law, and both recurrences for the pairwise logits.  Their evidence drift
and variance are
\[
 d=\frac{2\delta^2}{\sigma^2}
   =\frac{(\mu_1-\mu_0)^2}{2\sigma^2}=d_{\min},
 \qquad \operatorname{Var}(Y_t)=2d.
\tag{27}
\]
In pairwise log odds the two transition maps are
\[
 F_q(h)=2\operatorname{artanh}\!\left((1-2q)\tanh(h/2)\right),
 \qquad
 G_q(h)=L\tanh((1-2q)h/L).
\tag{28}
\]

\begin{lemma}[Binary common recovery]\label{lem:binary-recovery}
Let $r=\lceil T/c\rceil$, $H=r+1$, $a=3\log T$, and $A=B=5a$.
There are events $\mathcal B_q$ such that
\[
 \Pr(\mathcal B_q)L^2\longrightarrow0,
\tag{29}
\]
and on $\mathcal B_q^c$ the exact and radial posterior log odds at time
$H$, oriented toward the block's initial state, are both at least $a$.
\end{lemma}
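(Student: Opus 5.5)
The plan is to reduce both binary recurrences to one comparison process driven by the shared evidence, and then bound its failure probability by a single prefix-sum event plus a geometric sum over suffixes.

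\textbf{Setup and the switch event.} Work in the translated symmetric model with emission means $-\delta,+\delta$. Both filters start from $h_0=0$ and satisfy
\[
h^E_{t+1}=F_q(h^E_t)+Y_{t+1},\qquad h^R_{t+1}=G_q(h^R_t)+Y_{t+1},
\]
with the same log-likelihood-ratio increments $Y_t$. Orient everything toward $S_0$. The first piece of $\mathcal B_q$ is the event that the block $S_0,\dots,S_{H-1}$ contains a switch. Its probability is at most $Hq=O(T)e^{-T}$, so it contributes $O(T^3e^{-T})\to0$ after multiplication by $L^2$. On the complement, the oriented increments $Y_t$ are i.i.d.\ $\mathcal N(d,2d)$ with $d=d_{\min}$, by (27).

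\textbf{Deterministic one-step comparison.} I will show that there is a deficit $\varepsilon_q$ with $H\varepsilon_q\to0$ such that, for $\Psi\in\{F_q,G_q\}$ and every real $h$,
\[
\Psi(h)\ge\min(h,B)-\varepsilon_q .
\]
This follows from three facts.
\begin{enumerate}
\item Both maps are odd, increasing, and shrink absolute value. Hence $\Psi(h)\ge h$ for $h\le0$, and $\Psi(h)\ge\Psi(B)$ for $h\ge B$.
\item On $[0,B]$ the exact map satisfies $F_q(h)=h+\log\frac{1-q+qe^{-h}}{1-q+qe^{h}}\ge h-2qe^{h}$. Since $B=15\log T$, this deficit is at most $2qT^{15}$, which is superpolynomially small.
\item On $[0,B]$ the radial map satisfies $\tanh x\ge x-x^3/3$, so $G_q(h)\ge(1-2q)h-h^3/L^2\ge h-O(qB+B^3/L^2)=h-O(\log^3T/T^2)$.
\end{enumerate}
Taking $\varepsilon_q$ to be the larger deficit gives $H\varepsilon_q=O(\log^3T/T)\to0$.

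Set $\xi_t=Y_t-\varepsilon_q$ and define the clipped walk
\[
W_0=0,\qquad W_{t+1}=\min(W_t,B)+\xi_{t+1}.
\]
Because the map $x\mapsto\min(x,B)+\xi$ is monotone, induction gives $h^E_t,h^R_t\ge W_t$ simultaneously. Unrolling the recursion gives the explicit formula
\[
W_H=\min\Bigl(\sum_{t=1}^H\xi_t,\ \min_{0\le k<H}\Bigl(B+\sum_{t=k+1}^H\xi_t\Bigr)\Bigr).
\]

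\textbf{Probabilistic tail.} Define the remainder of $\mathcal B_q$ as the event $W_H<a$ on the no-switch block. This event splits into a prefix failure and a suffix failure.
\begin{enumerate}
\item Prefix failure is $\sum_{t\le H}\xi_t<a$. The mean is $H(d-\varepsilon_q)\ge(d_{\min}/c)T-o(1)$, and $d_{\min}/c>1$. The variance is $2dH=O(T)$. A Gaussian lower tail therefore gives probability at most $e^{-\kappa T}$, which is negligible against $L^2$.
\item Suffix failure of length $m=H-k$ is $\sum\xi<a-B=-4a$. Its probability is at most $\exp\!\bigl(-(m d'+4a)^2/(4dm)\bigr)$ with $d'=d-\varepsilon_q\ge d/2$. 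Expanding the square and dropping the positive term gives $\le e^{-md/8-2a}$ up to constants.
\end{enumerate}
Summing the suffix bound over $m\ge1$ gives $O(e^{-2a})=O(T^{-6})$. Since $L^2=O(T^2)$, this proves (29). On $\mathcal B_q^c$ we then have $h^E_H,h^R_H\ge W_H\ge a$.

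\textbf{Main obstacle.} The delicate step is the uniform comparison $\Psi(h)\ge\min(h,B)-\varepsilon_q$ for the tanh clamp. The cubic loss is only $O(B^3/L^2)$, and it must stay $o(1/H)$, which it does only because $B$ is logarithmic in $T$. This is exactly what the choice $A=B=5a$ with $a=3\log T$ arranges. Using the explicit reflected-walk formula for $W_H$ replaces a stepwise union bound by this geometric suffix sum.
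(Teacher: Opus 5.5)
Your proposal is correct and follows essentially the paper's route: the same one-step deficit budgets for $F_q$ and $G_q$ (your $\varepsilon_q$ plays the role of $\rho_q(B)$), a Gaussian prefix bound, a geometric sum of Gaussian lower tails over suffixes that is exactly the paper's persistence condition (33), and an $Hq$ charge for a switch. The one real difference is that you apply the paper's post-hit inequality $M_q(x)\ge\min\{x,B\}-\rho_q(B)$ from time $0$ through the unrolled clipped walk. This makes the paper's separate event forcing a hit of $A$ (threshold $Q=L+A+r\rho_q(A)$) redundant and lowers the prefix threshold to $a$, so your version of this lemma does not even use $c<d_{\min}$. One minor slip: $d'\ge d/2$ by itself only gives $e^{-md/16-a}$, not $e^{-md/8-2a}$; the resulting $O(T^{-3})$ suffix total still suffices because $L^2T^{-3}\to0$.
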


\par\noindent\emph{Proof.}
For $x\ge0$, define the common budget for one step for the confidence loss
\[
 \rho_q(x)=\max\left\{
 \log(1-q+qe^x)-\log(1-q),\
 2qx+\frac{x^2}{L}\right\}.
\tag{30}
\]
The two entries bound the losses of $F_q$ and $G_q$, respectively, on a
positive interval.  Conditional on a block with no switch and its initial sign,
the signed evidence variables $Y_0,\ldots,Y_r$ are independent
$\mathcal N(d,2d)$.  Put $N=r+1$ and
\[
 Q=L+A+r\rho_q(A),\qquad
 \Sigma_{\rm pre}=
 \exp\left[-\frac{(Nd-Q)^2}{4Nd}\right].
\tag{31}
\]
If $\sum_{i=0}^rY_i>Q$, both recurrences must hit $A$: otherwise the
lower bounds for one step
$F_q(x)\ge x-\rho_q(A)$ and
$G_q(x)\ge x-\rho_q(A)$ sum from the uniform reset to a terminal value
strictly above $A$, a contradiction.

After a hit, monotonicity and saturation give, for either map $M_q$,
\[
 M_q(x)\ge\min\{x,B\}-\rho_q(B).
\tag{32}
\]
Thus the hit persists to margin $a$ whenever every nonempty suffix obeys
\[
 \sum_{i=j}^r\bigl(Y_i-\rho_q(B)\bigr)>-(A-a).
\tag{33}
\]
For a suffix of length $\ell$, the bound on the Gaussian lower tail is at most
\[
 \exp\left[
 -\frac{\{A-a+\ell[d-\rho_q(B)]\}^2}{4\ell d}
 \right].
\]
Summing over possible hit locations, and using $\rho_q(5a)\to0$, we have eventually
$d-\rho_q(B)\ge d/2$.  With $A-a=4a$,
\[
 \frac{(4a+\ell d/2)^2}{4\ell d}
 \ge a+\frac{\ell d}{16}.
\]
The entire suffix sum is therefore at most
$C_de^{-a}=C_dT^{-3}$.

For the prefix, choose $d_0=(c+d)/2$.  Eventually $Q\le Nd_0$, and
\[
 L^2\Sigma_{\rm pre}
 \le T^2\exp\left[-\frac{r(d-c)^2}{16d}\right]\longrightarrow0.
\]
Removing the conditioning on no switch separately from the prefix and suffix
bounds costs at most $2Nq$, and
$2NqL^2=O(T^3e^{-T})\to0$.  Combining these three terms proves (29) and
the deterministic hit and persist argument proves the common margin.
\hfill$\square$\par

\subsection{Completion of the main theorem}

First let $K\ge3$.  Conditional on $S_0=s$, intersect the actual event of no switch
$\mathcal N_s$, the maximal event $\mathcal E_L$, and the two
events for exact recovery from
Lemmas~\ref{lem:constant-wrong} and~\ref{lem:switch-bayes}.  On this good
event, Lemmas~\ref{lem:radial-recovery} and~\ref{lem:exact-margin} give
a common terminal margin for the true state at least $c_*L^{2/3}$ for some
$c_*>0$.  Lemmas~\ref{lem:envelope} and~\ref{lem:curvature} then yield
\[
 D_{\rm KL}(p^E_H\|p^R_H)
 \le C_KL^2e^{-c_*L^{2/3}}.
\tag{34}
\]
The complement has conditional probability at most
\[
 Hq+L^{-3}+(K-1)e^{-\kappa L}+CLe^{-L/2}.
\tag{35}
\]
This unions event types only; temporal control is already contained in the
single maximal event.  Applying the deterministic $2L$ envelope (25) on
the complement shows that the conditional expected KL tends to zero.  The
stationary starting state is uniform, so averaging the finitely many
conditional bounds over $s$ proves Theorem~\ref{thm:main} for $K\ge3$.

For $K=2$, Lemma~\ref{lem:binary-recovery} and (26), with $R=2L$, give
on $\mathcal B_q^c$
\[
 D_{\rm KL}(p^E_H\|p^R_H)
 \le CL^2e^{-3\log T}\longrightarrow0.
\]
On $\mathcal B_q$, (25) and (29) give
$\mathbb E[D_{\rm KL};\mathcal B_q]\le2L\Pr(\mathcal B_q)\to0$.
This proves the binary case for arbitrary distinct means and completes the
proof of Theorem~\ref{thm:main}.

\subsection{Proof of diverging updates with vanishing predictive KL}

\par\noindent\emph{Proof of Theorem~\ref{thm:quantitative-separation}.}
Fix finite $K\ge2$ and abbreviate
\[
 \alpha=1-\frac{Kq}{K-1},\qquad
 L=L_K(q),\qquad b=\frac q{K-1}.
\]
Adding a common constant to the input leaves $\operatorname{softmax}(z)$,
and therefore $\log(P_q^\top\operatorname{softmax}(z))$, unchanged; the radial
output is also unchanged because its definition centers the input. Pairwise output differences are therefore gauge invariant, and
the displayed witness $z_q=L(1/4,-1/4,0,\ldots,0)$ is centered.

Put
\[
 Z_q=e^{L/4}+e^{-L/4}+K-2.
\]
The definition of $L$ gives $b=(1-q)e^{-L}$. For $i\in\{0,1\}$, let
\[
 \eta_i=\frac{bZ_q}{\alpha e^{(z_q)_i}}.
\]
Since $Z_q\le Ke^{L/4}$,
\[
 0\le\eta_0\le\frac{K(1-q)}{\alpha}e^{-L},
 \qquad
 0\le\eta_1\le\frac{K(1-q)}{\alpha}e^{-L/2}.
\tag{36}
\]
For fixed $K$, both vanish as $q\to0^+$. Common output centering cancels in
the difference for the chosen pair, so
\[
 \begin{aligned}
 (\Phi_q(z_q))_0-(\Phi_q(z_q))_1
 &=\frac L2+\log(1+\eta_0)-\log(1+\eta_1)\\
 &=\frac L2+o(1),
 \end{aligned}
\tag{37}
\]
where the absolute remainder is at most $\eta_0+\eta_1$.

Because $r(z_q)=L/2$, the radial gap for the chosen pair is exactly
\[
 (R_q(z_q))_0-(R_q(z_q))_1
 =L\tanh(\alpha/2).
\tag{38}
\]
Subtracting (38) from (37), dividing by $L$, and using $\alpha\to1$ yields
\[
 \frac{
 [(\Phi_q(z_q))_0-(\Phi_q(z_q))_1]
 -[(R_q(z_q))_0-(R_q(z_q))_1]}
 {L}
 \longrightarrow \frac12-\tanh\frac12=c_\star>0.
\tag{39}
\]
The positivity follows from $\tanh x<x$ for $x>0$. The eventual lower bound
and divergence of the centered supremum follow directly from (39).

It remains to compare the decoded distributions at the same witness.
Coordinate $0$ is the unique maximizer of both outputs. For $K\ge3$, every
zero coordinate has
\[
 \eta_{\rm zero}=\frac{bZ_q}{\alpha}
 \le\frac{K(1-q)}{\alpha}e^{-3L/4}\longrightarrow0.
\tag{40}
\]
Hence the exact margin of the top over zero is $L/4+o(1)$, while (37) gives the
exact margin of the top over coordinate $1$ as $L/2+o(1)$. When $K=2$ there are no
zero coordinates, and (37) is the only margin over a wrong state. Every exact
margin of the top over a wrong state is therefore eventually at least $L/5$.

Under the radial map, every margin of the top over a wrong state is at least
$(L/2)\tanh(\alpha/2)\ge L/5$ eventually. The exact transition
probabilities lie strictly between $b$ and $1-q$, so its pairwise logit
diameter is below $L$; the radial pairwise diameter is also below $L$.
Thus the range of $\Phi_q(z_q)-R_q(z_q)$ is below $2L$. Applying
Lemma~\ref{lem:curvature} with $m=L/5$ and $R=2L$ gives
\[
 D_{\rm KL}\!\left(
 \operatorname{softmax}\Phi_q(z_q)\,\middle\|\,
 \operatorname{softmax}R_q(z_q)\right)
 \le4K(K-1)L^2e^{-L/5}\longrightarrow0.
\tag{41}
\]
This proves all claims. \hfill$\square$\par

\subsection{Proof of the binary transition geometry}

\par\noindent\emph{Proof of Proposition~\ref{prop:geometry}.}
Writing $u=e^h$ and $D_q(u)=((1-q)u+q)(qu+1-q)$ gives
\[
F_q'(h)=\frac{(1-2q)u}{D_q(u)},\qquad
F_q''(h)=\frac{(1-2q)q(1-q)u(1-u^2)}{D_q(u)^2}.
\]
The first formula, equivalently
$F_q'(h)=(1-2q)p(1-p)/[p'(1-p')]$ for $p=\sigma(h)$ and
$p'=q+(1-2q)p$, is at most $1-2q$, with equality only at $h=0$;
the range $p'\in(q,1-q)$ proves saturation. The second formula is strictly
negative for $h>0$, so strict concavity on $[0,2x]$ and $F_q(0)=0$ give
$\Delta_q=2F_q(x)-F_q(2x)>0$. For residuals
$e_z=F_q(z)-A(z)$, the affine second difference vanishes and hence
\[
 \Delta_q=|e_0-2e_x+e_{2x}|
 \le4\max_z|e_z|,
\]
which proves the quantitative obstruction. \hfill$\square$\par
\section{Empirical Estimands and Aggregation}
\label{app:empirical-replay}

This appendix defines the illustration with equally spaced Gaussians at fixed $K$, experiments with learned models, and estimands for the scalar controls. These experiments illustrate the results and are not used in the proof of Theorem~\ref{thm:main}. The displayed subset with equally spaced Gaussians was selected after the broader prespecified grid had been evaluated, to illustrate the two directions rather than test a uniform onset at finite scale. On the original Gaussian grid, four of five configurations pass the joint directional criterion; the $K=8$ skewed configuration has an unresolved slope of the log KL over the prespecified tail. The supplement includes results for the full original grid and identifies the additional experiments conducted afterward; the displayed subset does not replace the original outcome.

\subsection{Illustration with equally spaced Gaussians at fixed $K$}

\subsubsection{Design, coupling, and estimands}

For $K\in\{2,4,8\}$, the scalar Gaussian means begin as the arithmetic progression $0,1,\ldots,K-1$, are centered, and are rescaled so that the minimum pairwise distance is one. The observation variance is one, hence
\[
 d_{\min}=\min_{i\ne j}\frac{(\mu_i-\mu_j)^2}{2}=\frac12,
 \qquad c=0.45d_{\min}=0.225.
\]
At each $L\in\{8,16,32,64,128\}$ we set
\[
 q=\frac{K-1}{e^L+K-1},
 \qquad H=\left\lceil\frac{L}{0.225}\right\rceil+1.
\]
This implemented horizon differs from the theorem's $\lceil-\log(q)/c\rceil+1$ by $O_K(1)$ steps, since $L=-\log q+\log(K-1)+\log(1-q)$; it shares the logarithmic scale rather than the exact schedule. Four seed blocks contribute 1,024 stationary trajectories per cell, for 4,096 paired paths. Exact Bayes and the tanh radial filter consume the same observations. Every displayed path and filter state is finite.

The internal quantity is $G_H=\|\mathcal C(z_H^E-z_H^R)\|_2$ and the task quantity is $D_{\rm KL}(p_H^E\Vert p_H^R)$. The normalized gap $G_H/L$ distinguishes growth proportional to the confidence scale from a merely positive absolute gap. Internal slopes regress mean $G_H$ on $L$; predictive slopes regress the logarithm of mean predictive KL on $L$. Both use the predeclared tail window $L\in\{32,64,128\}$.

Uncertainty is computed by resampling complete paths. Each of 2,000 bootstrap replicates independently resamples path indices within each scale cell, then fits the statistic across those resampled cell means. Filters remain paired on each path, but independently simulated scales are not paired. Table~\ref{tab:fixed-k-bridge} reports the three displayed configurations.

\begin{table*}[t]
\caption{\textbf{Equally spaced Gaussian illustration at fixed scale.} Internal slopes regress centered separation from exact to radial on $L$; slopes of the log KL regress $\log D_{\rm KL}(\mathrm{exact}\Vert\mathrm{radial})$ on $L$. These values illustrate the theorem's two directions for three fixed state counts; they are not used in its proof.}
\label{tab:fixed-k-bridge}
\centering\small\tablemetrics
\setlength{\tabcolsep}{2.5pt}
\begin{tabularx}{\textwidth}{cXXX}
\toprule
\textbf{$K$} & \textbf{Internal slope [95\% CI]} &
\textbf{Final gap/$L$ [95\% CI]} & \textbf{Slope of log KL [95\% CI]} \\
\midrule
2 & \shortstack{$0.5794$\\{\small$[0.5788,0.5800]$}} & \shortstack{$0.5419$\\{\small$[0.5414,0.5423]$}} & \shortstack{$-0.1574$\\{\small$[-0.1661,-0.1454]$}} \\
\rowcolor{rowgraymid}
4 & \shortstack{$0.7493$\\{\small$[0.7487,0.7499]$}} & \shortstack{$0.7315$\\{\small$[0.7311,0.7319]$}} & \shortstack{$-0.0265$\\{\small$[-0.0292,-0.0241]$}} \\
8 & \shortstack{$0.8688$\\{\small$[0.8681,0.8696]$}} & \shortstack{$0.8728$\\{\small$[0.8724,0.8733]$}} & \shortstack{$-0.0085$\\{\small$[-0.0090,-0.0080]$}} \\
\bottomrule
\end{tabularx}

\end{table*}

\subsubsection{Controls and implementation details}

Hard clipping and the affine update are comparisons rather than assumptions of the theorem. Hard clipping may outperform tanh at finite scales because the theorem claims existence and asymptotic convergence, not optimality of the smooth map. The affine control has lower mean KL than tanh in every displayed tail cell $L\ge32$. All 4,096 sampled paths in each such cell contain no switch even by $2H$, so these curves do not test rare costs in the switch tail or identify saturation as necessary. The binary $\lceil8/q\rceil$ controls address a separate recovery question at the longer endpoint.

The supplementary code includes the data generator, simulation parameters, seed construction, measurements per trajectory, and the 2,000-replicate bootstrap procedure used to produce the curves and table. Numerical checks verify the KL direction and that all simulated trajectories are complete and finite. Table values are rounded to four decimal places; the accompanying data provide full precision.

\subsection{Experiments with learned models}

The experiment covers six switch probabilities $q\in\{2^{-3},\ldots,2^{-8}\}$, two training stages (G1 distilled and G2 trained end to end), five learned arms and an additional analytic tanh reference, three tuning seeds, and ten disjoint confirmation seeds. For each stage and model, the selected hyperparameter setting minimizes mean validation loss over all three tuning seeds, with ties broken deterministically; a setting missing any tuning seed is ineligible. This selection rule was fixed before confirmation. All $180$ tuning runs and $600$ final evaluation runs completed successfully. Every confirmation seed is included in the reported aggregates. The experimental protocol also specified test excess NLL on the next observation and calibration error, but neither was measured. Table~\ref{tab:arms} specifies the model architectures and selected optimizer settings; the official Mamba implementation is \texttt{mamba-ssm==2.3.2.post1}.

\FloatBarrier
\begin{table}[t]
\caption{\textbf{Model architectures and training settings.} All arms use AdamW, batch size $256$, at most $10{,}000$ updates, validation every $250$ updates, and patience $2{,}000$; $\eta$ and $\lambda$ denote learning rate and weight decay. G1 distills the exact predictive distribution, whereas G2 minimizes negative log likelihood of the next observation. Parameter counts are matched within $5\%$. The arm with a finite window uses $H(q)=\lceil-\log q\rceil+1$; the Sharan existence bound $\log 2/H(q)$ ranges from $0.173$ to $0.099$ over this grid and is not a guarantee for the trained MLP.}
\label{tab:arms}
\centering\small\tablemetrics
\begin{tabularx}{\linewidth}{>{\raggedright\arraybackslash}p{0.17\linewidth}X>{\centering\arraybackslash}p{0.13\linewidth}>{\raggedright\arraybackslash}p{0.22\linewidth}}
\toprule
\textbf{Arm} & \textbf{Architecture} & \textbf{Parameters} & \textbf{G1 / G2 $(\eta,\lambda)$} \\
\midrule
Mamba $\times2$ & two blocks; $d_{\rm model}=32$, $d_{\rm state}=16$, convolution $4$, expansion $2$ & 20,001 & $(10^{-3},10^{-2})$ / $(10^{-4},0)$ \\
\rowcolor{rowgraymid}
Mamba $\times1$ & one block; $d_{\rm model}=48$, otherwise as above & 19,873 & $(10^{-3},0)$ / $(3\times10^{-4},0)$ \\
GRU & one layer, hidden width $80$ & 20,001 & $(10^{-3},0)$ / $(10^{-3},0)$ \\
\rowcolor{rowgraymid}
Scalar S6 & one scalar state; two layers of width $139$ in the SiLU coefficient network & 20,018 & $(3\times10^{-4},0)$ / $(3\times10^{-4},0)$ \\
Finite window & last $H(q)$ observations plus validity mask; widths $136,136,135,134,134,133$ for $q=2^{-3},\ldots,2^{-8}$ & 19,951--19,993 & $(10^{-3},0)$ / $(3\times10^{-4},0)$ \\
\bottomrule
\end{tabularx}

\end{table}

The closure statistic in Table~\ref{tab:perq} is computed within a common
stage, $q$, seed, and endpoint:
\[
 C_{a}=
 \frac{D_{\mathrm{S6}}-D_{a}}
      {D_{\mathrm{S6}}-D_{\tanh}}.
\]
Thus S6 is $0$ and the analytic tanh arm is $1$ before averaging.  The table
reports the mean and standard error of these ten closure values, one for each seed,
not a standard error obtained by treating paths within a seed as independent
replicates.

\subsection{Predictive KL: orientation and clipping}

The experiments with learned models and the scalar controls also use different probability clipping settings. Table~\ref{tab:perq} uses $D_{\mathrm{KL}}(\text{exact Bayes}\Vert\text{model})$ after float32 sigmoid conversion and clipping at $\varepsilon_{32}=\texttt{torch.finfo(torch.float32).eps}\approx1.1920929\times10^{-7}$. The scalar controls below use float64 probabilities clipped at $10^{-15}$. These settings compute KL after different clipping operations, so the reported values are not directly interchangeable.

The fixed-$K$ theorems, Table~\ref{tab:controls}, and Figure~\ref{fig:mechanism} all use the direction $D_{\mathrm{KL}}(\text{exact}\Vert\text{approximation})$, although the experiments use their separately disclosed probability clipping settings. An additional diagnostic reverses the direction and reports $D_{\mathrm{KL}}(\tanh\Vert\mathrm{exact})$. Because KL is asymmetric, those values in the reverse orientation are not compared numerically with the theorem or the comparison from exact to control.

That auxiliary diagnostic evaluates the tanh recurrence at $H(q)$ without probability
clipping. It computes Bernoulli KL directly in logit space, uses the same ten seeds and
$20{,}000$ paths per seed, and is reported separately from the comparison from exact to control.

\begin{center}
\begingroup\small\setlength{\tabcolsep}{2.5pt}
\begin{tabular}{c|cccccc}
$q$ & $2^{-3}$ & $2^{-4}$ & $2^{-5}$ & $2^{-6}$ & $2^{-7}$ & $2^{-8}$ \\
\hline
$D_{\mathrm{KL}}(\tanh\Vert\mathrm{exact})$
& $1.74\!\times\!10^{-4}$ & $8.53\!\times\!10^{-4}$
& $2.27\!\times\!10^{-3}$ & $4.23\!\times\!10^{-3}$
& $6.00\!\times\!10^{-3}$ & $7.63\!\times\!10^{-3}$
\end{tabular}
\endgroup
\end{center}
The largest relative standard error is $0.40\%$. The loss in the reverse orientation increases
across this finite grid. It is a diagnostic in the finite regime, not an
estimate of either proved limit from exact to radial or its asymptotic rate.

If $e$ is the exact predictive logit, $m$ is a model predictive logit,
$p=\sigma(e)$, and $r=\sigma(m)$, the corresponding untruncated empirical
estimand is
\[
 D_{\mathrm{true}}(e\Vert m)
 =D_{\mathrm{KL}}\!\left(\operatorname{Ber}(p)
          \middle\Vert\operatorname{Ber}(r)\right)
 =p(e-m)+\operatorname{sp}(m)-\operatorname{sp}(e),
 \qquad \operatorname{sp}(z)=\log(1+e^z).
\]
The last expression is a stable evaluation in logit space of the ordinary
Bernoulli KL; implementations may evaluate \(\operatorname{sp}\) by
\texttt{logaddexp} without clipping either probability.

The experiment with scalar controls evaluates a distinct, clipped
quantity.  For
\(
 c_{\varepsilon}(u)=\min\{1-\varepsilon,\max\{\varepsilon,u\}\}
\)
with $\varepsilon=10^{-15}$, it records
\[
 D_{\mathrm{clip},\varepsilon}(e\Vert m)
 =D_{\mathrm{KL}}\!\left(
 \operatorname{Ber}(c_{\varepsilon}(\sigma(e)))
 \middle\Vert
 \operatorname{Ber}(c_{\varepsilon}(\sigma(m)))\right).
\]
Accordingly, every scalar value currently printed in
Table~\ref{tab:controls} and plotted in the third panel of Figure~\ref{fig:mechanism} is clipped predictive KL with orientation
\emph{exact Bayes $\Vert$ control}.  Clipping is nearly inactive for the two
saturating controls but can substantially cap a nonsaturating control in an
extreme tail.  The clipped and untruncated estimands must therefore be named
separately. The reported control results use clipped KL, not the untruncated quantity.

\subsection{Scalar controls and endpoints}

All four scalar recurrences consume the same simulated observation paths as
the exact filter.  After $T$ observations, let $h_T$ be the exact posterior
logit and \(\widetilde h_T^a\) the state of control $a$.  The final logits
compared by the scalar evaluator are
\[
 e_T=F_q(h_T),
 \qquad
 m_T^a=M_q^a(\widetilde h_T^a),
 \qquad
 a\in\{\tanh,\mathrm{clip},\mathrm{affine},\mathrm{identity}\}.
\]
The two endpoints are the horizon matched to the theorem
$H(q)=\lceil-\log q\rceil+1$ and the long endpoint
$\lceil8/q\rceil$.  The experiment in Table~\ref{tab:controls} and Figure~\ref{fig:mechanism}
uses $q=2^{-3}$ and $q=2^{-8}$, ten public seeds $1080,\ldots,1089$,
$20{,}000$ paths per seed at $H(q)$, and $2{,}000$ paths per seed at the long
endpoint.  Thus each short cell contains $200{,}000$ paths and each long cell
contains $20{,}000$ paths.

\begin{table}[t]
\caption{\textbf{Controls separate saturation from geometry.} Expected
KL at the final step, $\mathrm{KL}(\text{exact Bayes}\|\text{control})$, at the endpoint matched to the theorem $H(q)=\lceil-\log q\rceil+1$ and the long endpoint $\lceil8/q\rceil$. At the long endpoint, the saturating losses remain below $6.4\times10^{-3}$, while the tested affine and identity losses reach $5.53$ and $14.54$ nats, respectively (bold).}
\label{tab:controls}
\centering\small\tablemetrics
\begin{tabularx}{\linewidth}{>{\raggedright\arraybackslash}p{0.20\linewidth}*{4}{Y}}
\toprule
& \multicolumn{2}{c}{\textbf{Saturating}} & \multicolumn{2}{c}{\textbf{Nonsaturating}} \\
\cmidrule(lr){2-3}\cmidrule(lr){4-5}
\textbf{Endpoint} & \textbf{tanh} & \textbf{clip} & \textbf{affine} & \textbf{identity} \\
\midrule
\rowcolor{rowgraymid}
\multicolumn{5}{l}{\emph{$q=2^{-8}$}} \\
$H(q)$ & $6.0\times10^{-3}$ & $1.3\times10^{-3}$ & $7.9\times10^{-2}$ & $8.4\times10^{-2}$ \\
$\lceil 8/q\rceil$ & $6.36\times10^{-3}$ & $1.37\times10^{-3}$ & $\mathbf{5.53}$ & $\mathbf{14.54}$ \\
\hdashline
\rowcolor{rowgraymid}
\multicolumn{5}{l}{\emph{$q=2^{-3}$}} \\
$H(q)$ & $1.72\times10^{-4}$ & $3.20\times10^{-3}$ & $2.44\times10^{-1}$ & $9.15\times10^{-1}$ \\
$\lceil 8/q\rceil$ & $1.73\times10^{-4}$ & $3.20\times10^{-3}$ & $4.06\times10^{-1}$ & $\mathbf{10.28}$ \\
\bottomrule
\end{tabularx}

\end{table}

For a seed $s$ with $N_s$ paths, the stored replicate is
\[
 \widehat D_s^a=\frac1{N_s}\sum_{i=1}^{N_s}
 D_{\mathrm{clip},10^{-15}}(e_{T,i}\Vert m_{T,i}^a).
\]
With $S=10$, the manuscript mean and uncertainty are
\[
 \overline D^a=\frac1S\sum_{s=1}^{S}\widehat D_s^a,
 \qquad
 \operatorname{SE}(\overline D^a)=
 \sqrt{\frac{1}{S(S-1)}
 \sum_{s=1}^{S}(\widehat D_s^a-\overline D^a)^2}.
\]
The random stream for one $q$/endpoint/seed cell is derived from the displayed seed, the exponent of $q$, and the horizon; all four controls reuse that cell. The manuscript aggregates seed means rather than treating paths within a seed as independent replicates. The largest relative standard error among the sixteen plotted quantities is at most $3.1\%$.

The right pair in Figure~\ref{fig:mechanism} separates measurement from illustration. The third panel reports the mean losses of the scalar controls. The fourth panel is not a sampled trajectory: it evaluates the logistic decoder on a fixed grid, with schematic guide logits $v=3$ and $u=5.25$.

\subsection{Errors relative to the latent state}

Let $Z_{T+1}\in\{0,1\}$ be the latent state after the terminal transition,
$s_Z=2Z_{T+1}-1$, $e_T$ the exact predictive logit, and $m_T^{\rm aff}$ the
affine predictive logit.  The event that the affine prediction disagrees with the latent state is
\[
 W_{\rm truth}=\mathbf 1\{s_Zm_T^{\rm aff}\le0\}.
\]
The associated confidence event used by the formal diagnostic for the lower bound is
\[
 W_{\rm joint}=W_{\rm truth}\,
 \mathbf 1\{\sigma(s_Ze_T)\ge15/16\}.
\]
Both indicators are defined relative to the true latent state. Disagreement with the exact filter is a different quantity and was not measured in this experiment.

The experiment uses $q=2^{-5},2^{-8},2^{-10},2^{-12}$,
the same ten public seeds, and $1{,}000$ paths per seed.  Means across seeds
$\pm$ standard errors for $W_{\rm truth}$ are respectively
$0.1982\pm0.0064$, $0.1821\pm0.0042$, $0.1829\pm0.0031$, and
$0.1868\pm0.0036$.  The corresponding
$W_{\rm joint}$ values, qualified by confidence, are $0.0794\pm0.0027$, $0.1649\pm0.0046$,
$0.1781\pm0.0028$, and $0.1856\pm0.0036$.  The reported range describes errors relative to the latent state, rather than disagreement with the exact filter.

\subsection{Scope of the empirical conclusions}

The experiments support two deliberately limited conclusions. First, at the long
endpoint the saturating controls remain stable while the two nonsaturating
controls degrade under the disclosed clipped metric.  Second, matching the
short theorem horizon does not by itself establish stability at long horizons for
a learned architecture.  Neither observation proves the open lower bound for the terminal joint event qualified by confidence, identifies tanh as the unique stable recurrence, or establishes a positive result on transfer between architectures.

\section{Extended Background}
\begingroup\setlength{\parskip}{2pt plus 1pt}

This appendix places the result in the literature on recurrent models, approximate filtering, and sequential prediction.

\subsection{The family with a deterministic state}

The recurrence studied here is deliberately the smallest object that exhibits
the property in question: a state of fixed size updated deterministically from the
previous state and the current observation. For recurrent implementations, this property permits inference in linear time and
memory independent of sequence length. Configurations with long convolutions or attention hybrids need not share that memory bound.

The lineage begins with the question of how a state of fixed size should summarize
a growing history at all. Optimal polynomial projection gives one answer with an
explicit approximation guarantee \citep{gu2020hippo}, and structured state
spaces turn that answer into a trainable layer \citep{gu2022efficiently}, later
simplified \citep{smith2023simplified} and given variants in continuous time
\citep{hasani2022liquids4}. Initialization of these models remains an active
question in its own right \citep{lienen2025unhippo}. A parallel line showed that
carefully parameterized linear recurrences recover much of the benefit without
the state space machinery \citep{orvieto2023resurrecting}, and designs with long convolutions
and hybrid designs traded recurrence for structured mixing
\citep{poli2023hyena,fu2023hungry}. Gated linear attention and its delta rule
successors reached the same destination from the attention side
\citep{yang2024gated,yang2024parallelizing}, with sparsity now used to enlarge
the state without paying for it densely \citep{cabannes2026sparsedelta}, while
recurrent architectures competitive at language scale arrived independently
\citep{peng2023rwkv,beck2024xlstm}. The selective mechanism that makes the
transition depend on the input \citep{gu2024mamba} and the duality that exposes such
models as a restricted form of attention \citep{dao2024transformers} complete
the picture.

These models motivate asking what predictive cost follows from a prescribed
update geometry. Shared motivation does not extend our existence theorem to
every member of this family: the theorem concerns the explicit radial filter,
not its implementation by a named neural architecture.

\subsection{What is already known about the gap, and what is not}

The representational side of the question is comparatively well mapped. Formal
placements bound what specified architectures can express \citep{sarrof2024expressive}, while empirical studies on formal languages probe generalization outside the training distribution \citep{deltang2023neural}. Arguments from circuit complexity
place models of fixed depth inside classes that cannot perform certain sequential
computations \citep{merrill2024illusion}, and analyses with limited precision show
how much of the apparent capacity survives finite arithmetic
\citep{li2025characterizing}. On the constructive side, changing the spectrum or
the sparsity pattern of the transition recovers ability to track the state that
diagonal models lack
\citep{grazzi2024negeigenvalues,structuredsparse2025,deltaproduct2025}, and the
frontier between recall and throughput has been characterized directly
\citep{arora2024simple}. Benchmarks now target exact state tracking as a primary object \citep{chessworldmodel2026}.

What this literature does not settle is the step our paper takes. A proved
inability to represent an update is not by itself a proved cost under the data
distribution, because the distribution need not visit the region where the
representation fails. The closest existing work asks about generalization of
selective state space models on filtering tasks \citep{selssm2026filtering},
which shares our setting but not our question: we fix the recurrence and ask
what its representational shortfall costs in expected predictive divergence.

\subsection{Filtering, stability, and their relation to this result}

Hidden Markov filtering \citep{rabiner1989tutorial} and entropy questions for functions of chains with finitely many states \citep{blackwell1957entropy} have a long history. Exact filtering of a $K$-state HMM admits a deterministic continuous belief state with $K-1$ coordinates. Finite dimension is not finite cardinality or bounded precision; our restriction concerns update geometry, not the existence of a sufficient state of finite dimension. The stability literature asks a superficially
similar question to ours and a materially different one: whether a filter
initialized incorrectly but running the correct kernels forgets its error
\citep{ocone1996asymptotic}, surveyed by
\citet{chigansky2011intrinsic}, with quantitative forms via contraction
coefficients \citep{mcdonald2020exponential}. Robustness results perturb the kernels instead, controlling policy costs \citep{kara2020robustness,kara2022datadriven} or error in the filter kernel and policy performance \citep{demirci2025sensitivity}. Policies with a finite window can become nearly optimal as window length grows \citep{kara2021nearOptimality}. Our state dimension is fixed, but $R_q$ changes numerically with $q$, and its map discrepancy in the worst case grows rather than vanishing as a perturbation parameter. How fast a correct filter
forgets is itself quantified \citep{legland2004stability}, and estimating the mixing time is a developed subject \citep{wolfer2019estimating}.

Learned and structured filters form a third strand. Probabilistic state space
models have been fused with the selective architecture \citep{kalmamba2024},
particle filters made differentiable \citep{corenflos2021differentiable}, belief
embeddings learned with consistency guarantees \citep{solinas2025neural}, and
Updates in the style of the Kalman filter recast as attention or as regression at test time
\citep{kla2026kalmanattention,peng2025gatedKalmanetFading}, including adaptive
variants \citep{mehrfard2026adaptiveLearnedState}. Learning from ordinary HMM observation sequences can be cryptographically hard, whereas query access to conditional probabilities permits efficient learning; guarantees from conditional samples also depend on a fidelity parameter \citep{mahajan2023learning}. Reconstruction from finite messages on trees exhibits phase transitions between memory and accuracy \citep{jain2019accuracy}, and a recent synthesis compares state space models and HMMs \citep{ssmhmm2026relation}.

\subsection{Prediction under a wrong model}

Our theorem measures the categorical KL between filtered posteriors after the observation, not the KL between densities of the next observation, and binary empirical endpoints additionally apply transition mixing. These losses score decoded distributions rather than internal parameters. Prediction with short memory supplies a conceptual benchmark on the next observation
\citep{sharan2018prediction}; sequential prediction under log loss with a
misspecified model class is studied directly \citep{feder2021sequential}; and
streaming projection onto a mixture over paths with a fixed budget is another deterministic recurrence with a different maintained object and truncation \citep{duranmartin2026predictive}. Read as detection delay, our controls at long horizons belong to quickest change
detection, from the original cumulative sum scheme
\citep{page1954continuous} through its optimality theory
\citep{lorden1971procedures,moustakides1986optimal,pollak1985optimal} to learned
detectors \citep{gong2022nncusum}. Behavior in context on Markov sources is a further contact point
\citep{bondaschi2025markov,bayesicl2026selectivessm}.

\subsection{Comparison by mathematical object}

Nearby literatures can sound interchangeable when summarized as ``approximating a sequence model,'' but they differ in what may change with the accuracy target and in where error is measured. Table~\ref{tab:object-comparison} records the distinction needed here.

\begin{table*}[htbp]
\caption{\textbf{Nearby questions separated by object and limit.} The comparison is conceptual; it does not claim that the cited settings are special cases of ours.}
\label{tab:object-comparison}
\centering\small\densetablemetrics
\begin{tabularx}{\textwidth}{p{0.19\textwidth}p{0.22\textwidth}p{0.22\textwidth}X}
\toprule
\textbf{Line of work} & \textbf{Object being compared} & \textbf{Accuracy resource or limit} & \textbf{Relation to this paper} \\
\midrule
Architecture expressivity \citep{merrill2024illusion,sarrof2024expressive,cohenkarlik2026expressivity} & Function or language represented by an architecture class & Depth, state structure, or input length varies with the theorem & Establishes representational separations, but generally does not integrate their task cost under a data law. \\
\rowcolor{rowgraymid}
Prediction with short memory \citep{sharan2018prediction} & Predictor with full history versus a finite observation window & Window length grows as the target error shrinks & Supplies a positive approximation baseline; our state dimension and prescribed family remain fixed, but the numerical map depends on $q$. \\
Filter stability and robustness \citep{ocone1996asymptotic,mcdonald2020exponential,legland2004stability} & Two filters with different initial beliefs or nearby kernels & Influence of the initial condition is forgotten or kernel perturbation is taken small & Controls belief error when the source of mismatch disappears; our two transition geometries do not converge internally. \\
\rowcolor{rowgraymid}
Compression that is aware of the task \citep{dubois2021lossy,dubois2020learning,hafezkolahi2021rate} & Original representation versus a compressed representation & Compression is optimized relative to a downstream task & Shares the lesson that discarded information can be irrelevant to the task, but not the recurrent Bayesian construction under the path law. \\
This paper & Exact categorical Bayes mixing versus one radial update with a fixed state & Fixed finite $K$; switch rarity tends to zero at $H(q)=\Theta(\log(1/q))$ & Proves diverging centered map separation and vanishing decoded KL, then proves stationary expected predictive closure. \\
\bottomrule
\end{tabularx}

\end{table*}

\FloatBarrier
The comparison also limits the architectural interpretation: a selective SSM, a gated recurrent network, and the radial filter all keep a fixed state, but the theorem concerns the explicit radial map, not a class containing every such recurrence. Exact Bayes and the radial witness are mathematical filters, so their separation predates training or parameterization. The result sits between architecture expressivity and statistical decision theory: an internal separation can fail to induce predictive loss under a specified sequential law, leaving the converse criterion and learned realization open.
\par\endgroup
\end{document}